\documentclass[11pt,letterpaper]{article}
\usepackage[T1]{fontenc}
\usepackage[utf8]{inputenc}
\usepackage{lmodern}
\usepackage[margin=1in]{geometry}
\usepackage{amsmath,amssymb,amsthm,mathtools}
\usepackage{microtype}
\usepackage{booktabs,array,tabularx}
\usepackage[round,authoryear]{natbib}
\usepackage{xurl}
\usepackage[hidelinks]{hyperref}
\usepackage{enumitem}
\usepackage{fancyhdr}
\usepackage{needspace}
\hypersetup{pdftitle={Ordinary Nonconvex SGD under Distance-Dependent Moments},pdfauthor={Wei Biao Wu},pdfsubject={Minimax BG-0 stationarity and finite-moment confidence bounds for single-sample SGD}}
\numberwithin{equation}{section}
\newtheorem{theorem}{Theorem}[section]
\newtheorem{proposition}[theorem]{Proposition}
\newtheorem{lemma}[theorem]{Lemma}
\newtheorem{corollary}[theorem]{Corollary}
\theoremstyle{definition}
\newtheorem{assumption}[theorem]{Assumption}
\newtheorem{example}[theorem]{Example}
\theoremstyle{remark}
\newtheorem{remark}[theorem]{Remark}
\newcommand{\R}{\mathbb R}
\newcommand{\E}{\mathbb E}
\newcommand{\Prob}{\mathbb P}
\newcommand{\F}{\mathcal F}
\newcommand{\1}{\mathbf 1}
\newcommand{\norm}[1]{\left\lVert #1\right\rVert}
\newcommand{\ip}[2]{\left\langle #1,#2\right\rangle}
\newcommand{\cprob}{\mathfrak c_p}
\newcommand{\Dprob}{\mathfrak D_p}
\newcommand{\op}{\mathrm{op}}
\newcommand{\eps}{\varepsilon}

\allowdisplaybreaks[2]
\title{\textbf{Ordinary Nonconvex SGD under\newline Distance-Dependent Moments}\\[0.35em]
\large Finite-Horizon Stationarity and Nagaev Bounds}
\author{Wei Biao Wu}
\date{September 23, 2026\\[0.15em]\small Revised manuscript}
\begin{document}
\maketitle
\begin{abstract}
Uniform noise-moment bounds exclude stochastic gradients whose variability increases with the iterate. We study ordinary, single-sample stochastic gradient descent for smooth, lower-bounded, possibly nonconvex objectives under distance-dependent conditional moments. Under second moments alone, a direct descent--displacement argument yields $T^{-1/3}$ expected average squared-gradient stationarity with a horizon-dependent stepsize. An explicit oracle-complexity corollary matches the known smooth Blum--Gladyshev (BG-0) lower bound, including the $Lb_2\Delta^3\varepsilon^{-6}$ and $L\Delta\sigma^2\varepsilon^{-4}$ stochastic terms, where $\Delta$ is the initial objective gap and $\sigma^2+b_2\|x-x_1\|^2$ bounds the variance. Thus unchanged SGD attains the minimax stochastic complexity in this second-moment class. For $p>2$, predictable localization and a Hilbert-space Fuk--Nagaev inequality yield a high-probability bound separating logarithmic variance and polynomial rare-shock contributions. The localization radius is derived from the recursion: no bounded-iterate assumption, clipping, normalization, momentum, or increasing batch size is needed. We also give increasing-confidence rates, an objective-gap-growth refinement recovering root-$T$ stationarity, and stochastic $L^p$-Lipschitz examples. The broad BG-0 optimality statement is distinguished from the smaller mean-square-smooth class, in which additional oracle structure permits faster algorithms.
\end{abstract}
\noindent\textbf{Keywords:} stochastic gradient descent; nonconvex optimization; distance-dependent variance; stochastic Lipschitz continuity; martingale concentration; Fuk--Nagaev inequality.

\section{Introduction}\label{sec:introduction}
The stochastic gradient available at a parameter $x$ need not have a noise distribution whose scale is bounded independently of $x$. Random-design least squares provides a simple example: its gradient contains a random matrix multiplied by the current parameter. More generally, an $L^p$-Lipschitz stochastic-gradient field has noise moments that can grow linearly with the distance from a reference point. Fresh independent observations guarantee conditional unbiasedness along an adaptive optimization trajectory, but do not turn this state-dependent scale into a deterministic uniform bound.

Stochastic gradient methods are central to large-scale learning \citep{bottou2018}. This distinction matters for nonconvex optimization. In a strongly convex model, control of objective values or distance to a minimizer may supply the missing stability. For a general smooth, lower-bounded objective, a small objective value need not control the distance traveled, and the objective may be constant in directions along which the stochastic-gradient variance increases. A proof that simply substitutes the moment at initialization for a global noise envelope is therefore invalid.

We consider the unmodified recursion
\begin{equation}\label{eq:sgd-intro}
 x_{t+1}=x_t-\alpha G(x_t,Z_t),\qquad t=1,\ldots,T,
\end{equation}
with one fresh independent observation $Z_t$ at each iteration. The stepsize is constant within a run and may be selected as a function of its horizon and desired confidence. Our principal criterion is
\begin{equation}\label{eq:criterion-intro}
 A_T=\frac1T\sum_{t=1}^T\norm{\nabla F(x_t)}^2,
 \qquad F(x)=\E f(x,Z)
\end{equation}
when a differentiable random loss representation is available. The results require an unbiased stochastic-gradient oracle, not an interchange of differentiation and expectation without justification. We also state the theorems directly for martingale-difference errors, so that the exact role of independent sampling is transparent.

The main assumption bounds the conditional second moment and, for the concentration result, the squared conditional $L^p$ norm of the error by an affine function of $\norm{x_t-x_1}^2$. A two-point stochastic Lipschitz assumption is sufficient, but is not needed by the optimization argument. This distinction enlarges the theorem's scope and identifies precisely what enters the proof.

\subsection{Results and mechanism}
Our first result uses only conditional second moments. It couples expected descent with the exact displacement decomposition
\begin{equation}\label{eq:displacement-intro}
 x_{k+1}-x_1=-\alpha\sum_{t=1}^k\nabla F(x_t)
               -\alpha\sum_{t=1}^k\xi_t,
 \qquad \xi_t=G(x_t,Z_t)-\nabla F(x_t).
\end{equation}
The deterministic gradient sum is controlled by Cauchy--Schwarz, while martingale orthogonality controls the second moment of the noise sum. The resulting feedback term can be absorbed when a quantity proportional to
\begin{equation}\label{eq:h-intro}
 h_T=\alpha^2T(1+L\alpha T)
\end{equation}
is sufficiently small. We obtain the familiar bounded-variance form, up to a factor of two from the absorption step:
\[
 \E A_T\ \le\ \frac{4\Delta}{\alpha T}+2L\alpha\sigma^2,
 \qquad \Delta=F(x_1)-F_*,
\]
but with an additional horizon-dependent admissibility condition. Here $\sigma^2$ is an intercept in the distance-dependent variance bound, not a uniform variance bound over $\R^d$. A step $\alpha\asymp T^{-2/3}$ makes the feedback condition hold and yields $\E A_T=O(T^{-1/3})$. More precisely, Corollary~\ref{cor:complexity} gives a deterministic oracle budget of order
\begin{equation}\label{eq:complexity-intro}
 1+\frac{L\Delta}{\eps^2}
   +\frac{L\Delta\sigma^2}{\eps^4}
   +\frac{Lb_2\Delta^3}{\eps^6}
\end{equation}
in the small-accuracy regime, for a uniformly randomized output satisfying $\E\|\nabla F(\widehat x)\|^2\le\eps^2$. The two stochastic terms match the lower bound of \citet[Theorem~1]{fazla2026}; Section~\ref{sec:minimax} gives the precise oracle-class and accuracy comparison.

Our second result refines the confidence dependence. A Hilbert-space martingale inequality controls both the accumulated squared errors and the maximum noise partial sum. An objective-level stop controls the adaptive martingale transform in the descent inequality. A second, distance-based stop produces a deterministically bounded noise envelope for an auxiliary recursion. On a single high-probability event, an explicit radius prevents either recursion from leaving the localization region. Consequently the auxiliary recursion agrees with the original SGD recursion throughout the run. No projection or stopping rule is implemented by the optimizer.

The resulting bound separates
\[
 L\alpha\sigma^2\log(6/\delta)
 \quad\hbox{and}\quad
 L\alpha\nu_p^2 T^{2/p-1}\delta^{-2/p}.
\]
The first term reflects a variance scale and the second a finite-$p$ rare-shock scale. Both the confidence level and the growth coefficients enter the stepsize restriction. At fixed problem parameters and confidence, the rate is $T^{-1/3}$. For example, $p=4$ permits a bound of order $T^{-1/3}(\log T)^{1/3}$ with failure probability $T^{-1}$ after a logarithmic stepsize adjustment. These are finite-horizon assertions; different horizons may use different stepsizes and therefore different runs.

A separate objective-gap-growth assumption allows the same localization idea to close with $L\alpha^2T$ rather than $h_T$, recovering a root-$T$ guarantee. Examples distinguish this additional geometry from stochastic Lipschitz continuity alone. Finally, an additive-shock calculation records why a polynomial confidence contribution cannot generally be removed for unchanged SGD under finite moments.

\subsection{Relation to existing work}\label{sec:related}
\paragraph{Distance growth and fixed-horizon analysis.}
Distance-dependent moment conditions originate in stochastic approximation \citep{blum1954,gladyshev1965}. The BG-0 terminology records that the reference point is the initialization. \citet{alacaoglu2026} develop convex guarantees using Halpern anchoring and compare these growth assumptions with expected smoothness. Centered-noise and uncentered-gradient conditions must be distinguished; Proposition~\ref{prop:bg-equivalence} gives their relation under objective smoothness. For convex composite optimization, \citet[Theorem~8]{domke2023} already use a fixed horizon to control quadratically growing estimator moments. The difficulty of a direct nonconvex analysis, without a regularization device, is identified in Section~5 of \citet{alacaoglu2026}. Theorem~\ref{thm:expectation} answers its fixed-horizon version; a horizon-free, single-run result is a separate question.

\paragraph{Oracle complexity and the role of the update.}
\citet{ghadimi2013} give foundational finite-variance nonconvex stationarity guarantees. The bounded-variance oracle lower bounds of \citet{arjevani2023} distinguish objective smoothness from mean-square smoothness of the stochastic oracle. Under BG-0, \citet{fazla2026} establish the corresponding $\eps^{-6}$ lower bound and achieve this order with distance-dependent dynamic batching. \citet[Corollary~3.1]{upadhyay2026} obtain the same order for an expected gradient-norm criterion using normalized momentum. Our second-moment result attains the BG-0 stochastic complexity with exactly one fresh oracle call and the ordinary SGD update at every iteration. The direct descent--displacement closure, rather than a new growth assumption or a new worst-case exponent, is the algorithm-specific contribution.

\paragraph{Additional geometry and stochastic smoothness.}
The objective-gap second-moment condition in Section~\ref{sec:gap} lies in the expected-smoothness/ABC framework of \citet{khaled2023}; our result there is a finite-$p$ confidence refinement. Adaptive methods under gradient-dependent affine variance, such as the AdaGrad-Norm analysis of \citet{faw2022}, use a different growth condition and update. The stochastic $L^p$-Lipschitz condition of \citet[Assumption~2 in the public draft]{li2026} supplies our distance-growth assumptions from a moment at one point. Their moment-contraction analysis also uses strong convexity and a noise-sensitive stepsize. Constant-step invariant-law and averaged-iterate analyses under additional geometry appear in \citet{dieuleveut2020}. For $p>2$, stochastic $L^p$-Lipschitz continuity implies mean-square smoothness. The faster variance-reduced BG-0 methods of \citet{fazla2026} exploit that extra structure, so the broad-class optimality result is not an optimality assertion for this smaller oracle class.

\paragraph{Heavy tails and high-probability optimization.}
\citet{zhang2020} connect heavy-tailed stochastic gradients with attention models and establish optimization upper and lower bounds in a finite-$q$ regime with $1<q\le2$. Their Theorem~6 gives an $\Omega(\eps^{-(3q-2)/(q-1)})$ nonconvex query lower bound under their uniform central-$q$-moment assumptions and stated parameter regime; this differs from the distance-growth, finite-variance BG-0 lower bound. Clipping-based high-probability analyses include convex accelerated methods \citep{gorbunov2020}, clipping with normalization and momentum for nonconvex objectives \citep{cutkosky2021}, and finite-central-moment results \citep{sadiev2023,nguyen2023}. \citet{liuzhou2025} obtain clipping-free heavy-tail guarantees using batched normalized momentum, not the ordinary update. Under sub-Gaussian noise, \citet{liunguyen2023} give high-probability results for SGD and adaptive methods. The cited robust methods can obtain logarithmic confidence dependence under their respective uniform moment assumptions, whereas our unchanged update retains a polynomial rare-shock term. These works separate two issues that also matter here: an infinite variance caused by $q<2$ is different from finite conditional variance that is unbounded over the parameter space, and robust updates have different confidence costs from unchanged SGD. Our $p>2$ theorem keeps the ordinary update and separates a variance-scale contribution from a polynomial rare-shock contribution under distance-dependent moments.

\paragraph{Probability tools and Nagaev bounds for SGD.}
The exponential-plus-polynomial approach originates with \citet{fuk1971,nagaev1979}. Martingale developments include \citet{pinelis1994,fan2017,rio2017}; the sole non-elementary concentration input used below is the maximal Hilbert-space specialization of \citet[Theorem~2.2]{mollenhauer2025}. For averaged SGD in a linear model, \citet{zhu2022} already obtain Nagaev estimation bounds and demonstrate the necessity of a polynomial confidence contribution. Our optimization criterion is instead nonconvex average squared-gradient stationarity. The new extension derives deterministic conditional envelopes from iterate-dependent growth by closing the displacement bound, rather than assuming bounded iterates. Section~\ref{sec:provenance} records the supplied unpublished source of the deterministic-envelope and quadratic-shock ingredients, which are included with full proofs and are not counted among the new results.

\paragraph{Organization.}
Section~\ref{sec:setup} states the assumptions and verifies stochastic Lipschitz oracles. Section~\ref{sec:expectation} proves the second-moment theorem. Sections~\ref{sec:probability} and~\ref{sec:uniform} develop the probability and deterministic-envelope ingredients. Section~\ref{sec:main} proves the distance-dependent Nagaev theorem, and Section~\ref{sec:rates} gives its stepsize and confidence consequences. Section~\ref{sec:gap} treats objective-gap growth. Sections~\ref{sec:examples} and~\ref{sec:lower} give examples, a stability counterexample, and the additive rare-shock calculation. Section~\ref{sec:discussion} records the scope and limitations.

\section{Setup and moment assumptions}\label{sec:setup}
Let $(\Omega,\F,\Prob)$ carry an increasing filtration $(\F_t)_{t=0}^T$. Write $\E_{t-1}Y=\E(Y\mid\F_{t-1})$. Norms without a subscript are Euclidean, matrix operator norms are denoted by $\norm{\cdot}_{\op}$, and $\norm{Y}_q=(\E\norm{Y}^q)^{1/q}$. All almost-sure conditional inequalities are understood for the finitely many indices in the run. Conditional moments of nonnegative variables are initially interpreted in the extended sense; Lemma~\ref{lem:integrability} then establishes the finite-horizon integrability used in the proofs.

\begin{assumption}[Objective and recursion]\label{ass:objective}
The function $F:\R^d\to\R$ is differentiable, its gradient is $L$-Lipschitz for some $L>0$, and $F(x)\ge F_*$ for a finite constant $F_*$. The initialization $x_1$ is deterministic. For a deterministic stepsize $\alpha>0$,
\begin{equation}\label{eq:sgd}
 x_{t+1}=x_t-\alpha(g_t+\xi_t),\qquad g_t=\nabla F(x_t),
 \qquad 1\le t\le T,
\end{equation}
where $x_t$ is $\F_{t-1}$-measurable, $\xi_t$ is $\F_t$-measurable, and $\E_{t-1}\xi_t=0$.
\end{assumption}
We use $\Delta=F(x_1)-F_*$ and $A_T=T^{-1}\sum_{t=1}^T\norm{g_t}^2$ throughout. The lower bound $F_*$ need not be the attained minimum, and no set of minimizers is required. Every upper bound remains valid when $\Delta$ is replaced by a known deterministic upper bound on $F(x_1)-F_*$.

\begin{assumption}[Distance-dependent second moment]\label{ass:second}
There are finite $\sigma^2,b_2\ge0$ such that
\begin{equation}\label{eq:second-growth}
 \E_{t-1}\norm{\xi_t}^2\le \sigma^2+b_2\norm{x_t-x_1}^2
 \qquad\hbox{almost surely},\quad 1\le t\le T.
\end{equation}
\end{assumption}
\begin{assumption}[Distance-dependent finite-$p$ moments]\label{ass:p}
For a fixed $p>2$, Assumption~\ref{ass:second} holds and there are finite $\nu_p^2,b_p\ge0$ such that
\begin{equation}\label{eq:p-growth}
 \bigl(\E_{t-1}\norm{\xi_t}^p\bigr)^{2/p}
 \le \nu_p^2+b_p\norm{x_t-x_1}^2
 \qquad\hbox{almost surely},\quad 1\le t\le T.
\end{equation}
\end{assumption}
The quantities $b_2,b_p$ multiply a squared distance. The quantity $\nu_p^2$ is a squared $L^p$ scale; it is not a $p$th moment. Both intercepts may be chosen as upper bounds rather than exact moments. The constants are deterministic, and rate statements hold them fixed as the horizon increases.

\subsection{Independent samples and stochastic Lipschitz continuity}
Let $Z_t$ be iid with law $P_Z$, independent of $\F_{t-1}$, and let $G:\R^d\times\mathcal Z\to\R^d$ be jointly measurable. Suppose, for every deterministic $x$,
\begin{equation}\label{eq:oracle-unbiased}
 \E_ZG(x,Z)=\nabla F(x),\qquad e(x,z)=G(x,z)-\nabla F(x).
\end{equation}
The oracle representation in~\eqref{eq:sgd} is $\xi_t=e(x_t,Z_t)$.

\begin{proposition}[Verification from stochastic Lipschitz continuity]\label{prop:lipschitz}
Suppose that for $p>2$,
\begin{equation}\label{eq:stochastic-lipschitz}
 \norm{G(x_1,Z)}_p<\infty,\qquad
 \norm{G(x,Z)-G(y,Z)}_p\le L_{G,p}\norm{x-y}
 \quad(x,y\in\R^d).
\end{equation}
For $q=2,p$, let $L_{G,q}$ be an admissible $L^q$-Lipschitz constant, put
\[
 s_q=\norm{e(x_1,Z)}_q,\qquad K_q=L_{G,q}+L,
\]
and take $L_{G,2}=L_{G,p}$ when no sharper second-moment constant is available. Then
\begin{equation}\label{eq:noise-lipschitz}
 \norm{e(x,Z)-e(y,Z)}_q\le K_q\norm{x-y},\qquad
 \norm{e(x,Z)}_q\le s_q+K_q\norm{x-x_1}.
\end{equation}
Assumption~\ref{ass:p} holds with
\begin{equation}\label{eq:lipschitz-constants}
 \sigma^2=2s_2^2,\quad b_2=2K_2^2,\qquad
 \nu_p^2=2s_p^2,\quad b_p=2K_p^2.
\end{equation}
The finite moment in~\eqref{eq:stochastic-lipschitz} may instead be imposed at any deterministic reference point.
\end{proposition}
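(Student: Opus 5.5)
The plan is to derive the noise Lipschitz bound from the oracle Lipschitz bound and then read off the moment growth. Three steps: show $e$ is $L^q$-Lipschitz, bound $e(x,Z)$ by the triangle inequality from $x_1$, and transfer the deterministic-point bounds to the adaptive iterates $x_t$ by conditioning.

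\emph{Noise Lipschitz bound.} Fix deterministic $x,y$ and $q\in\{2,p\}$. First note that $\norm{G(x,Z)}_q<\infty$ for every $x$. This follows from Minkowski: $\norm{G(x,Z)}_p\le\norm{G(x_1,Z)}_p+L_{G,p}\norm{x-x_1}$, and $\norm{\cdot}_2\le\norm{\cdot}_p$ on a probability space. So $e(x,Z)$ is well defined and integrable. Since
\[
e(x,z)-e(y,z)=G(x,z)-G(y,z)-(\nabla F(x)-\nabla F(y)),
\]
Minkowski's inequality in $L^q$ gives
\[
\norm{e(x,Z)-e(y,Z)}_q\le L_{G,q}\norm{x-y}+\norm{\nabla F(x)-\nabla F(y)}\le K_q\norm{x-y}.
\]
This uses Assumption~\ref{ass:objective} and the fact that the $L^q$ norm of a deterministic vector is its Euclidean norm. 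When no sharper second-moment constant is available, the choice $L_{G,2}=L_{G,p}$ is admissible, because $\norm{\cdot}_2\le\norm{\cdot}_p$. Taking $y=x_1$ and applying Minkowski once more gives $\norm{e(x,Z)}_q\le s_q+K_q\norm{x-x_1}$, where $s_q$ is finite. This proves~\eqref{eq:noise-lipschitz}.

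\emph{Passing to the iterates.} This is the only point needing care, because $x_t$ is random rather than deterministic. Since $Z_t$ is independent of $\F_{t-1}$ and $x_t$ is $\F_{t-1}$-measurable, the freezing (substitution) lemma for regular conditional expectations applies, using joint measurability of $G$. For nonnegative measurable $\phi$ it gives
\[
\E_{t-1}\phi(x_t,Z_t)=\Phi(x_t)\quad\text{a.s.},\qquad \Phi(x)=\E_Z\phi(x,Z).
\]
Applying this with $\phi=G$ (after checking integrability via the growth bound) yields $\E_{t-1}\xi_t=\E_Z e(x,Z)|_{x=x_t}=0$. Applying it with $\phi=\norm{e}^q$ yields
\[
\bigl(\E_{t-1}\norm{\xi_t}^q\bigr)^{1/q}\le s_q+K_q\norm{x_t-x_1}\quad\text{a.s.}
\]
Squaring and using $(a+b)^2\le 2a^2+2b^2$ gives the affine bounds~\eqref{eq:second-growth} and~\eqref{eq:p-growth} with the constants~\eqref{eq:lipschitz-constants}. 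Hence Assumption~\ref{ass:p} holds.

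\emph{Alternative reference point.} If the finite moment is imposed at some deterministic $x_0$ instead of $x_1$, the same Minkowski step gives $\norm{G(x_1,Z)}_p<\infty$, and the argument above proceeds unchanged.
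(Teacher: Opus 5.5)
Your proposal is correct and matches the paper's proof essentially step for step. The steps are Minkowski plus $L$-smoothness for the noise increment, the triangle inequality at $y=x_1$ followed by $(a+b)^2\le 2a^2+2b^2$ for the constants, the independence-based substitution identity for the predictable iterates $x_t$, and the same Minkowski transfer for an alternative reference point. The paper proves the substitution identity directly, via product indicators and the monotone class theorem, where you cite it as the freezing lemma. The integrability check you defer before taking vector conditional means of $G(x_t,Z_t)$ is exactly the finite-horizon induction of Lemma~\ref{lem:integrability}, which the paper cites at that point.
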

\begin{proof}
Monotonicity of probability-space $L^q$ norms gives the second-moment Lipschitz inequality from the $p$th-moment inequality. The triangle inequality and objective smoothness give
\[
 \norm{e(x,Z)-e(y,Z)}_q
 \le \norm{G(x,Z)-G(y,Z)}_q+\norm{\nabla F(x)-\nabla F(y)}
 \le K_q\norm{x-y}.
\]
Set $y=x_1$ and use the triangle inequality once more to obtain~\eqref{eq:noise-lipschitz}. Squaring and using $(a+b)^2\le2a^2+2b^2$ gives the pointwise moment-growth bounds.

Independence and joint measurability imply, for predictable $x_t$,
\begin{equation}\label{eq:conditional-substitution}
 \E_{t-1}e(x_t,Z_t)=0,\qquad
 \E_{t-1}\norm{e(x_t,Z_t)}^q
   =\left.\E_Z\norm{e(x,Z)}^q\right|_{x=x_t}.
\end{equation}
The identity for nonnegative functions follows first for product indicator functions and then by the monotone class theorem. For vector means it follows by integrability and componentwise application. Finite-horizon integrability also follows by the induction in Lemma~\ref{lem:integrability}. Substitution into the pointwise inequalities proves~\eqref{eq:lipschitz-constants}. Finally,
\[
 \norm{G(x_1,Z)}_p
 \le \norm{G(x_{\rm ref},Z)}_p+L_{G,p}\norm{x_1-x_{\rm ref}}
\]
transfers any finite reference-point moment to the initialization.
\end{proof}
\begin{remark}\label{rem:no-uniform}
The quantifiers in~\eqref{eq:stochastic-lipschitz} do not imply $\sup_x\norm{e(x,Z)}_p<\infty$. They control increments of the random field, not its absolute size over an unbounded parameter space. Nor does fresh-sample independence remove $\norm{x_t-x_1}$ from~\eqref{eq:conditional-substitution}. In addition, a two-point increment condition alone does not imply a finite moment at any point: an arbitrary additive random vector can be added without changing the increments. The single-point moment condition is indispensable for this verification.
\end{remark}

\subsection{Integrability and elementary objective bounds}
\begin{lemma}[Finite-horizon integrability]\label{lem:integrability}
Under Assumptions~\ref{ass:objective} and~\ref{ass:second}, $x_t,g_t,\xi_t$ are square-integrable for every finite index in the run, and $F(x_t)$ is integrable. Under Assumption~\ref{ass:p}, these vectors belong to $L^p$ as well.
\end{lemma}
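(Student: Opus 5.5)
We argue by induction on $t$, carrying the hypothesis $x_t\in L^2$ (respectively $x_t\in L^p$ under Assumption~\ref{ass:p}). The base case is immediate because $x_1$ is deterministic. Since $\nabla F$ is $L$-Lipschitz, we have $\norm{g_t}\le\norm{\nabla F(x_1)}+L\norm{x_t-x_1}$. Hence $g_t$ inherits every moment of $x_t$, and $g_t$ is $\F_{t-1}$-measurable.

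For the noise, we take expectations of the conditional bound~\eqref{eq:second-growth}. This bound is understood in the extended sense for nonnegative variables, and the tower property holds for such variables without integrability assumptions. It gives
\[
 \E\norm{\xi_t}^2=\E\,\E_{t-1}\norm{\xi_t}^2\le\sigma^2+b_2\E\norm{x_t-x_1}^2<\infty .
\]
Under Assumption~\ref{ass:p} we raise~\eqref{eq:p-growth} to the power $p/2$ and use $(a+b)^{p/2}\le 2^{p/2-1}(a^{p/2}+b^{p/2})$. This yields $\E_{t-1}\norm{\xi_t}^p\le 2^{p/2-1}(\nu_p^p+b_p^{p/2}\norm{x_t-x_1}^p)$, and taking expectations gives $\xi_t\in L^p$ whenever $x_t\in L^p$. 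The recursion~\eqref{eq:sgd} then writes $x_{t+1}$ as a linear combination of $x_t,g_t,\xi_t$, so Minkowski's inequality closes the induction. Because the horizon is finite, the induction covers every index in the run; no uniform-in-$T$ bound is needed.

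Integrability of $F(x_t)$ comes from the quadratic upper bound implied by $L$-smoothness:
\[
 F_*\le F(x_t)\le F(x_1)+\ip{\nabla F(x_1)}{x_t-x_1}+\tfrac L2\norm{x_t-x_1}^2 .
\]
The right-hand side is integrable because $x_t\in L^2$, and the lower bound is a finite constant. Hence $F(x_t)$ is integrable.

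The only delicate point is the step that converts the conditional moment bound into an unconditional one. The hypothesis $\E_{t-1}\xi_t=0$ presupposes that $\xi_t$ is conditionally integrable, but it is not needed for this step. The extended-sense interpretation of conditional expectations of nonnegative variables, together with monotone convergence, justifies $\E\norm{\xi_t}^2=\E\,\E_{t-1}\norm{\xi_t}^2$ before any finiteness is known. Once finiteness is established, the martingale-difference property holds in the ordinary $L^1$ sense. This is what later proofs use, and it also justifies the vector identity in~\eqref{eq:conditional-substitution}.
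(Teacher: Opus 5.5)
Your proposal is correct and follows essentially the same route as the paper: induction on $t$, the Lipschitz bound for $g_t$, the tower property applied to the conditional growth bound for $\xi_t$, Minkowski on the recursion, and the quadratic smoothness upper bound for $F(x_t)$. The only difference is cosmetic: you control the $p$th moment via $(a+b)^{p/2}\le 2^{p/2-1}(a^{p/2}+b^{p/2})$, whereas the paper uses $\sqrt{a+b}\le\sqrt a+\sqrt b$ inside the $L^q$ norm to obtain an explicit linear recursion for $\norm{x_{t+1}-x_1}_q$.
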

\begin{proof}
Write $d_t=x_t-x_1$. Objective smoothness gives
\begin{equation}\label{eq:gradient-linear}
 \norm{g_t}\le \norm{\nabla F(x_1)}+L\norm{d_t}.
\end{equation}
For $q=2$ use $\omega_q=\sigma^2,c_q=b_2$, and for $q=p$ use $\omega_q=\nu_p^2,c_q=b_p$. The conditional growth inequality and Minkowski give
\begin{align*}
 \norm{\xi_t}_q
 &=\left\|\bigl(\E_{t-1}\norm{\xi_t}^q\bigr)^{1/q}\right\|_q\\
 &\le \left\|\sqrt{\omega_q+c_q\norm{d_t}^2}\right\|_q
 \le \sqrt{\omega_q}+\sqrt{c_q}\,\norm{d_t}_q.
\end{align*}
Starting with $d_1=0$, the recursion implies
\[
 \norm{d_{t+1}}_q
 \le\{1+\alpha(L+\sqrt{c_q})\}\norm{d_t}_q
       +\alpha\{\norm{\nabla F(x_1)}+\sqrt{\omega_q}\}.
\]
An induction proves finiteness at every finite time. No uniform-in-$t$ bound is asserted by this induction. The lower bound on $F$ and the smooth upper estimate
\[
 F(x_t)\le F(x_1)+\ip{\nabla F(x_1)}{d_t}+\frac L2\norm{d_t}^2
\]
then show $F(x_t)\in L^1$.
\end{proof}

\begin{lemma}[Self-bounding gradient]\label{lem:self-bound}
For every $x\in\R^d$,
\begin{equation}\label{eq:self-bound}
 \norm{\nabla F(x)}^2\le 2L\{F(x)-F_*\}.
\end{equation}
\end{lemma}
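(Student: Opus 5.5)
The plan is the standard one-step descent argument, using only the quadratic upper bound implied by the $L$-Lipschitz gradient in Assumption~\ref{ass:objective} and the lower bound $F\ge F_*$. Fix $x\in\R^d$ and write $g=\nabla F(x)$. If $g=0$ there is nothing to prove, since the right side of~\eqref{eq:self-bound} is nonnegative. Otherwise set $y=x-L^{-1}g$, which is a gradient step with stepsize $1/L$.

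The first step is the descent lemma. Since $\nabla F$ is $L$-Lipschitz, the fundamental theorem of calculus along the segment from $x$ to $y$ gives
\[
 F(y)\le F(x)+\ip{g}{y-x}+\frac L2\norm{y-x}^2.
\]
This requires only differentiability with a Lipschitz gradient, because $t\mapsto F(x+t(y-x))$ then has a Lipschitz, hence integrable, derivative.

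The second step substitutes $y-x=-g/L$. This yields
\[
 F(y)\le F(x)-\frac1L\norm{g}^2+\frac1{2L}\norm{g}^2=F(x)-\frac1{2L}\norm{g}^2 .
\]
Combining this with $F(y)\ge F_*$ and rearranging gives $\norm{g}^2\le 2L\{F(x)-F_*\}$, which is exactly~\eqref{eq:self-bound}.

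No step is a genuine obstacle. The only point needing care is the justification of the descent lemma under mere differentiability, rather than $C^1$ or $C^2$ regularity. I would handle it with the absolutely continuous one-dimensional restriction and the Cauchy--Schwarz bound
\[
 \ip{\nabla F(x+t(y-x))-\nabla F(x)}{y-x}\le Lt\norm{y-x}^2,
\]
which integrates over $t\in[0,1]$ to the factor $L/2$.
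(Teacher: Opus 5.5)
Your proposal is correct and follows essentially the same route as the paper: apply the descent lemma at $y=x-L^{-1}\nabla F(x)$ and combine it with $F(y)\ge F_*$. The extra care you take with the descent lemma is harmless but not really needed, since an $L$-Lipschitz gradient is continuous, so $F$ is already $C^1$ and the fundamental theorem of calculus applies directly.
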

\begin{proof}
The descent lemma at $y=x-L^{-1}\nabla F(x)$ gives
\[
 F_*\le F(y)\le F(x)-L^{-1}\norm{\nabla F(x)}^2
                  +(2L)^{-1}\norm{\nabla F(x)}^2.
\]
Rearrangement proves the claim. Attainment of $F_*$ is not used.
\end{proof}

\begin{proposition}[Centered and uncentered distance growth]\label{prop:bg-equivalence}
For an unbiased iid oracle, suppose
\[
 \E_Z\norm{e(x,Z)}^2\le \sigma^2+b\norm{x-x_1}^2.
\]
Then
\begin{equation}\label{eq:bg-uncentered}
 \E_Z\norm{G(x,Z)}^2
 \le \sigma^2+2\norm{\nabla F(x_1)}^2
            +(b+2L^2)\norm{x-x_1}^2.
\end{equation}
Conversely, an uncentered bound $\E_Z\norm{G(x,Z)}^2\le a+c\norm{x-x_1}^2$ implies the same bound for $\E_Z\norm{e(x,Z)}^2$.
\end{proposition}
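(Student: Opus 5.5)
The plan rests on the bias--variance identity for an unbiased oracle. For each fixed deterministic $x$, unbiasedness \eqref{eq:oracle-unbiased} gives $\E_Z e(x,Z)=0$. Hence
\[
 \E_Z\norm{G(x,Z)}^2=\norm{\nabla F(x)}^2+\E_Z\norm{e(x,Z)}^2 .
\]
To justify this, I will expand $\norm{\nabla F(x)+e(x,Z)}^2$ and note that the cross term has zero mean. The identity is valid in the extended sense whenever either side is finite, and in particular under the assumed bound. Both directions of Proposition~\ref{prop:bg-equivalence} then follow by estimating or dropping $\norm{\nabla F(x)}^2$.

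For the forward direction, I will use smoothness of $F$ exactly as in \eqref{eq:gradient-linear}, namely $\norm{\nabla F(x)}\le\norm{\nabla F(x_1)}+L\norm{x-x_1}$. Squaring with $(a+b)^2\le 2a^2+2b^2$ gives
\[
 \norm{\nabla F(x)}^2\le 2\norm{\nabla F(x_1)}^2+2L^2\norm{x-x_1}^2 .
\]
Adding the assumed centered bound $\sigma^2+b\norm{x-x_1}^2$ then yields \eqref{eq:bg-uncentered}, with exactly the stated constants.

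For the converse, the identity gives $\E_Z\norm{e(x,Z)}^2=\E_Z\norm{G(x,Z)}^2-\norm{\nabla F(x)}^2\le\E_Z\norm{G(x,Z)}^2\le a+c\norm{x-x_1}^2$. Equivalently, this is the statement that the variance is at most the uncentered second moment.

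There is no real obstacle here. The only care needed is integrability. When the uncentered bound is finite, $G(x,Z)\in L^2$, so the cross term is integrable and the identity holds. In the forward direction, $e(x,Z)\in L^2$ by hypothesis and $\nabla F(x)$ is a deterministic vector, so again everything is finite.
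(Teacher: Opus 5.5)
Your proposal is correct and follows the paper's proof: you use the orthogonal decomposition $\E_Z\norm{G(x,Z)}^2=\norm{\nabla F(x)}^2+\E_Z\norm{e(x,Z)}^2$ from unbiasedness, square the gradient bound \eqref{eq:gradient-linear} via $(a+b)^2\le 2a^2+2b^2$ for the forward direction, and drop the nonnegative squared-gradient term for the converse. Your integrability remarks are a correct elaboration of points the paper leaves implicit.
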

\begin{proof}
Unbiasedness gives the orthogonal decomposition
\[
 \E_Z\norm{G(x,Z)}^2=\norm{\nabla F(x)}^2+\E_Z\norm{e(x,Z)}^2.
\]
The gradient estimate~\eqref{eq:gradient-linear}, with $(a+b)^2\le2a^2+2b^2$, proves~\eqref{eq:bg-uncentered}. Dropping the nonnegative squared-gradient term proves the converse.
\end{proof}

\section{A second-moment stationarity theorem}\label{sec:expectation}
The first theorem requires no $p$th moment with $p>2$. Its purpose is to separate the finite-horizon stability issue from the additional concentration issue.

\begin{theorem}[Expected stationarity and marginal displacement]\label{thm:expectation}
Under Assumptions~\ref{ass:objective} and~\ref{ass:second}, define
\begin{equation}\label{eq:h}
 h_T=\alpha^2T(1+L\alpha T).
\end{equation}
If
\begin{equation}\label{eq:expect-condition}
 0<\alpha\le L^{-1},\qquad b_2h_T\le\frac14,
\end{equation}
then
\begin{equation}\label{eq:expect-bound}
 \boxed{\quad \E A_T\le\frac{4\Delta}{\alpha T}+2L\alpha\sigma^2.\quad}
\end{equation}
Moreover,
\begin{equation}\label{eq:expect-distance}
 \max_{1\le t\le T+1}\E\norm{x_t-x_1}^2
 \le8\alpha T\Delta+4h_T\sigma^2.
\end{equation}
The maximum in~\eqref{eq:expect-distance} is outside the expectation.
\end{theorem}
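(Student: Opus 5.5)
The plan is to couple two inequalities. One is an expected descent inequality that bounds the summed squared gradients by $\Delta$ plus accumulated noise. The other is a displacement inequality, from the decomposition \eqref{eq:displacement-intro}, that bounds $\E\norm{x_k-x_1}^2$ by the summed squared gradients plus accumulated noise. Assumption~\ref{ass:second} feeds the displacement back into the noise, and the condition $b_2h_T\le\frac14$ lets that feedback be absorbed. Write $d_t=x_t-x_1$, $S=\sum_{t=1}^T\E\norm{g_t}^2$ and $D=\max_{1\le t\le T+1}\E\norm{d_t}^2$. By Lemma~\ref{lem:integrability} all these quantities, and $\E F(x_t)$, are finite, so subtracting and absorbing terms is legitimate.

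\textbf{Step 1 (descent).} Apply the descent lemma to \eqref{eq:sgd}:
\[
F(x_{t+1})\le F(x_t)-\alpha\ip{g_t}{g_t+\xi_t}+\tfrac{L\alpha^2}{2}\norm{g_t+\xi_t}^2 .
\]
Take $\E_{t-1}$ and use $\E_{t-1}\xi_t=0$; the cross term vanishes and $\E_{t-1}\norm{g_t+\xi_t}^2=\norm{g_t}^2+\E_{t-1}\norm{\xi_t}^2$. Since $\alpha\le L^{-1}$ gives $\alpha(1-L\alpha/2)\ge\alpha/2$, summing over $t\le T$, taking full expectations and using $F(x_{T+1})\ge F_*$ together with \eqref{eq:second-growth} yields
\[
S\le \frac{2\Delta}{\alpha}+L\alpha T\sigma^2+L\alpha b_2\sum_{t=1}^T\E\norm{d_t}^2
\le \frac{2\Delta}{\alpha}+L\alpha T\sigma^2+L\alpha T b_2 D .
\]

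\textbf{Step 2 (displacement).} For $k\le T$, write $d_{k+1}=-\alpha\sum_{t\le k}g_t-\alpha\sum_{t\le k}\xi_t$ and use $\norm{a+b}^2\le2\norm a^2+2\norm b^2$. Cauchy--Schwarz bounds the first sum by $k\sum_{t\le k}\norm{g_t}^2$. Martingale orthogonality, valid because each $\xi_t$ is square-integrable with $\E_{t-1}\xi_t=0$, gives $\E\norm{\sum_{t\le k}\xi_t}^2=\sum_{t\le k}\E\norm{\xi_t}^2$. Bounding $k\le T$ and the partial sums by full sums gives
\[
D\le 2\alpha^2TS+2\alpha^2T\sigma^2+2\alpha^2Tb_2D .
\]

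\textbf{Step 3 (closure).} This is the main obstacle, and it is where $h_T$ appears. Substitute Step~1 into Step~2:
\[
D\le 4\alpha T\Delta+2\alpha^2T(1+L\alpha T)\sigma^2+2b_2\alpha^2T(1+L\alpha T)D
=4\alpha T\Delta+2h_T\sigma^2+2b_2h_TD .
\]
Because $D<\infty$ and $2b_2h_T\le\frac12$, the last term can be absorbed, giving $D\le 8\alpha T\Delta+4h_T\sigma^2$. That is exactly \eqref{eq:expect-distance}, and the maximum sits outside the expectation by construction.

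\textbf{Step 4 (stationarity).} Return to Step~1 and divide by $T$:
\[
\E A_T\le \frac{2\Delta}{\alpha T}+L\alpha\sigma^2+L\alpha b_2\bigl(8\alpha T\Delta+4h_T\sigma^2\bigr).
\]
The $\sigma^2$ feedback term is at most $L\alpha\sigma^2$ since $4b_2h_T\le1$. For the $\Delta$ feedback term, note $b_2L\alpha^3T^2\le b_2h_T\le\frac14$. Hence $8b_2L\alpha^2T\Delta=\frac{8b_2L\alpha^3T^2\Delta}{\alpha T}\le\frac{2\Delta}{\alpha T}$. Adding the pieces gives \eqref{eq:expect-bound}.
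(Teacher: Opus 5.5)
Your proof is correct and follows the paper's argument essentially step for step. The shared steps are: expected descent giving $S_T\le 2\Delta/\alpha+L\alpha V_T$; the displacement bound via Cauchy--Schwarz and martingale orthogonality; the growth feedback closing into $R_T\le4\alpha T\Delta+2h_T(\sigma^2+b_2R_T)$; absorption using $2b_2h_T\le1/2$; and the final absorption via $L\alpha^3T^2\le h_T$. The only cosmetic difference is that you insert the growth bound directly rather than carrying the intermediate noise total $V_T=\sum_t\E\norm{\xi_t}^2$.
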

\begin{proof}
All expectations below are justified by Lemma~\ref{lem:integrability}. Define
\[
 S_T=\sum_{t=1}^T\E\norm{g_t}^2,\qquad
 V_T=\sum_{t=1}^T\E\norm{\xi_t}^2,\qquad
 R_T=\max_{1\le t\le T+1}\E\norm{x_t-x_1}^2.
\]
The smoothness inequality and conditional centering imply
\begin{align}
 \E_{t-1}F(x_{t+1})
 &\le F(x_t)-\alpha\norm{g_t}^2
              +\frac{L\alpha^2}{2}\E_{t-1}\norm{g_t+\xi_t}^2\notag\\
 &=F(x_t)-\alpha\left(1-\frac{L\alpha}{2}\right)\norm{g_t}^2
              +\frac{L\alpha^2}{2}\E_{t-1}\norm{\xi_t}^2.
 \label{eq:expected-descent}
\end{align}
Since $\alpha\le L^{-1}$, summation, total expectation, and $F(x_{T+1})\ge F_*$ give
\begin{equation}\label{eq:S-bound}
 S_T\le\frac{2\Delta}{\alpha}+L\alpha V_T.
\end{equation}
For $k\le T$, the exact identity~\eqref{eq:displacement-intro} and $(a+b)^2\le2a^2+2b^2$ yield
\[
 \E\norm{x_{k+1}-x_1}^2
 \le2\alpha^2k\sum_{t=1}^k\E\norm{g_t}^2
       +2\alpha^2\E\norm{\sum_{t=1}^k\xi_t}^2.
\]
For $s<t$, $\xi_s$ is $\F_{t-1}$-measurable and
$\E\ip{\xi_s}{\xi_t}=\E\ip{\xi_s}{\E_{t-1}\xi_t}=0$.
Thus martingale orthogonality gives
\begin{equation}\label{eq:R-S-V}
 R_T\le 2\alpha^2TS_T+2\alpha^2V_T.
\end{equation}
Taking expectations in~\eqref{eq:second-growth} also gives
\begin{equation}\label{eq:V-R}
 V_T\le T(\sigma^2+b_2R_T).
\end{equation}
Combining~\eqref{eq:S-bound}--\eqref{eq:V-R},
\begin{equation}\label{eq:R-feedback}
 R_T\le4\alpha T\Delta+2h_T(\sigma^2+b_2R_T).
\end{equation}
As $2b_2h_T\le1/2$, the last term can be moved to the left, proving~\eqref{eq:expect-distance}.

Substituting this displacement bound into~\eqref{eq:S-bound} and~\eqref{eq:V-R} gives
\begin{align}
 \E A_T
 &\le \frac{2\Delta}{\alpha T}+L\alpha\sigma^2
        +8Lb_2\alpha^2T\Delta+4Lb_2\alpha h_T\sigma^2.
 \label{eq:expect-before-absorb}
\end{align}
The inequality $L\alpha^3T^2\le h_T$ implies
\[
 8Lb_2\alpha^2T\Delta
 \le 8b_2h_T\frac{\Delta}{\alpha T}
 \le \frac{2\Delta}{\alpha T},\qquad
 4Lb_2\alpha h_T\sigma^2\le L\alpha\sigma^2.
\]
These two estimates prove~\eqref{eq:expect-bound}.
\end{proof}

\begin{remark}[An unabsorbed version]\label{rem:resolvent}
When $2b_2h_T<1$, the proof directly yields
\[
 R_T\le\frac{4\alpha T\Delta+2h_T\sigma^2}{1-2b_2h_T},
 \qquad
 \E A_T\le\frac{2\Delta}{\alpha T}+L\alpha(\sigma^2+b_2R_T).
\]
The constants in Theorem~\ref{thm:expectation} result from the convenient sufficient restriction $b_2h_T\le1/4$. They are not claimed to be best possible.
\end{remark}

\begin{corollary}[A one-sample $T^{-1/3}$ expectation rate]\label{cor:expect-rate}
Fix the objective and the constants in Assumption~\ref{ass:second}. Choose ${a_0}>0$ such that
\begin{equation}\label{eq:expect-a}
 {a_0}\le L^{-1},\qquad b_2({a_0}^2+L{a_0}^3)\le\frac14,
\end{equation}
and set $\alpha={a_0}T^{-2/3}$. Then for every positive integer $T$,
\begin{equation}\label{eq:expect-rate}
 \E A_T\le\frac{4\Delta}{{a_0}}T^{-1/3}+2L{a_0}\sigma^2T^{-2/3}.
\end{equation}
At fixed $\delta\in(0,1)$ the same right side divided by $\delta$ is a valid $1-\delta$ probability bound.
\end{corollary}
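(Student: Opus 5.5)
The corollary will follow by specializing Theorem~\ref{thm:expectation} to the stepsize $\alpha=a_0T^{-2/3}$. The plan is to verify the admissibility condition~\eqref{eq:expect-condition}, substitute into~\eqref{eq:expect-bound}, and finish the probability statement with Markov's inequality.

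\textbf{Step 1: admissibility.} For every positive integer $T$ we have $T^{-2/3}\le1$, so $\alpha\le a_0\le L^{-1}$. Next we compute $h_T$ for this stepsize. First,
\[
 \alpha^2T=a_0^2T^{-4/3}T=a_0^2T^{-1/3}\le a_0^2.
\]
Second, $L\alpha T=La_0T^{1/3}$. Combining these gives
\[
 h_T=a_0^2T^{-1/3}+La_0^3,
\]
and since $T^{-1/3}\le1$,
\[
 h_T\le a_0^2+La_0^3.
\]
Hence $b_2h_T\le b_2(a_0^2+La_0^3)\le1/4$ by~\eqref{eq:expect-a}. This uniform-in-$T$ domination of $h_T$ is the only point needing care. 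It works precisely because the exponent $2/3$ balances $\alpha^3T^2$ to a constant, while $\alpha^2T$ decays.

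\textbf{Step 2: substitution.} Theorem~\ref{thm:expectation} now applies, and~\eqref{eq:expect-bound} gives
\[
 \E A_T\le\frac{4\Delta}{a_0T^{1/3}}+2La_0\sigma^2T^{-2/3}.
\]
This is exactly~\eqref{eq:expect-rate}.

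\textbf{Step 3: probability statement.} Since $A_T\ge0$, Markov's inequality gives $\Prob(A_T>\E A_T/\delta)\le\delta$. Replacing $\E A_T$ by its upper bound from~\eqref{eq:expect-rate} shows that the right side of~\eqref{eq:expect-rate} divided by $\delta$ bounds $A_T$ with probability at least $1-\delta$.

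No step is a genuine obstacle. The only thing to check is that~\eqref{eq:expect-a} is stated so that it controls $h_T$ for all $T\ge1$, which Step~1 confirms.
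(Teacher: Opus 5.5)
Your proposal is correct and follows essentially the same route as the paper. The paper likewise computes $h_T=a_0^2T^{-1/3}+La_0^3\le a_0^2+La_0^3$, notes $\alpha\le a_0\le L^{-1}$, applies Theorem~\ref{thm:expectation}, and then uses Markov's inequality for the nonnegative variable $A_T$.
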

\begin{proof}
Here $h_T={a_0}^2T^{-1/3}+L{a_0}^3\le {a_0}^2+L{a_0}^3$ and $\alpha\le {a_0}\le L^{-1}$. Apply Theorem~\ref{thm:expectation} and then Markov's inequality to the nonnegative variable $A_T$.
\end{proof}
A fully explicit admissible choice is any positive ${a_0}$ no larger than
\[
 \min\left\{L^{-1},(8b_2)^{-1/2},(8Lb_2)^{-1/3}\right\},
\]
with the two growth restrictions omitted if $b_2=0$. The absence of $b_2$ from the displayed noise term in~\eqref{eq:expect-bound} does not mean that noise growth is free: it enters the admissible stepsize through~\eqref{eq:expect-condition}.

\Needspace{27\baselineskip}
\subsection{Oracle complexity and BG-0 optimality}\label{sec:minimax}
A power law with all parameters hidden does not identify the variance-growth cost. The next corollary makes the comparison quantitative and uses no higher moments.

\begin{corollary}[Parameter-explicit one-sample oracle complexity]\label{cor:complexity}
Assume Assumptions~\ref{ass:objective} and~\ref{ass:second}, let $\Delta>0$, and fix $\eps>0$. Choose an integer $T$ satisfying
\begin{equation}\label{eq:epsilon-horizon}
 T\ge\max\left\{1,\frac{8L\Delta}{\eps^2},
      \frac{32L\Delta\sigma^2}{\eps^4},
      \frac{512b_2\Delta^2}{\eps^4},
      \frac{4096Lb_2\Delta^3}{\eps^6}\right\},
\end{equation}
and set $\alpha=8\Delta/(T\eps^2)$.
Then $\E A_T\le\eps^2$. If $\tau$ is uniform on $\{1,\ldots,T\}$ and independent of the run, $\widehat x=x_\tau$ satisfies
\begin{equation}\label{eq:epsilon-output}
 \E\|\nabla F(\widehat x)\|^2\le\eps^2,
 \qquad \E\|\nabla F(\widehat x)\|\le\eps,
\end{equation}
using exactly $T$ stochastic-gradient queries. If $\eps^2\le L\Delta$, the sufficient budget has order
\begin{equation}\label{eq:optimal-complexity}
 T=O\left(1+\frac{L\Delta}{\eps^2}
              +\frac{L\Delta\sigma^2}{\eps^4}
              +\frac{Lb_2\Delta^3}{\eps^6}\right),
\end{equation}
with an absolute implied constant. Full-gradient evaluations are not needed to select the randomized output.
\end{corollary}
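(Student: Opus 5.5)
The plan is to reduce everything to Theorem~\ref{thm:expectation} with the stepsize $\alpha=8\Delta/(T\eps^2)$. There are three hypotheses to check: $\alpha\le L^{-1}$, $b_2h_T\le1/4$, and that the bound \eqref{eq:expect-bound} is at most $\eps^2$.

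\emph{Bound terms.} The first term of \eqref{eq:expect-bound} is $4\Delta/(\alpha T)=\eps^2/2$, independent of $T$. The noise term is $2L\alpha\sigma^2=16L\Delta\sigma^2/(T\eps^2)$, and this is at most $\eps^2/2$ exactly when $T\ge32L\Delta\sigma^2/\eps^4$. So $\E A_T\le\eps^2$.

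\emph{Stepsize and feedback conditions.} The condition $\alpha\le L^{-1}$ is the same as $T\ge8L\Delta/\eps^2$. For the feedback condition, split $h_T=\alpha^2T+L\alpha^3T^2$ and require each half to be at most $1/(8b_2)$:
\begin{itemize}
\item $\alpha^2T=64\Delta^2/(T\eps^4)$, so $b_2\alpha^2T\le1/8$ iff $T\ge512b_2\Delta^2/\eps^4$;
\item $L\alpha^3T^2=512L\Delta^3/(T\eps^6)$, so $b_2L\alpha^3T^2\le1/8$ iff $T\ge4096Lb_2\Delta^3/\eps^6$.
\end{itemize}
These are exactly the entries of \eqref{eq:epsilon-horizon}. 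When $b_2=0$ the two entries are zero and the condition is trivially satisfied.

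\emph{Randomized output.} Since $\tau$ is independent of the run, $\E\|\nabla F(x_\tau)\|^2=\E A_T\le\eps^2$. Jensen's inequality then gives $\E\|\nabla F(\widehat x)\|\le\eps$. Exactly one query per iteration is used, for a total of $T$ queries, and no gradient evaluation is needed to pick $\tau$.

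\emph{Order of the budget.} This is the only non-mechanical step: showing that the $b_2\Delta^2\eps^{-4}$ entry is dominated when $\eps^2\le L\Delta$. Write
\[
\frac{b_2\Delta^2}{\eps^4}=\frac{Lb_2\Delta^3}{\eps^6}\cdot\frac{\eps^2}{L\Delta}\le\frac{Lb_2\Delta^3}{\eps^6}.
\]
Hence the smallest admissible integer $T$, namely the ceiling of the maximum, is at most $1$ plus a constant times the sum in \eqref{eq:optimal-complexity}. This gives an absolute implied constant, for instance $4096$ times the sum plus $1$. The assumption $\Delta>0$ is needed only so that $\alpha>0$ is well defined.
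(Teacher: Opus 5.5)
Your proposal is correct and follows the paper's proof essentially step for step. You obtain $\alpha\le L^{-1}$ and $b_2h_T\le\frac18+\frac18$ by direct substitution, apply Theorem~\ref{thm:expectation} with both terms at most $\eps^2/2$, pass to $x_\tau$ through the conditional-expectation identity and Jensen (equivalently Cauchy--Schwarz), and absorb $b_2\Delta^2\eps^{-4}$ into $Lb_2\Delta^3\eps^{-6}$ when $\eps^2\le L\Delta$. Your explicit constant is also valid, because in that regime the maximum of $512b_2\Delta^2/\eps^4$ and $4096Lb_2\Delta^3/\eps^6$ is at most the latter, so the ceiling is at most $4096$ times the sum plus $1$.
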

\begin{proof}
The smoothness restriction follows from $T\ge8L\Delta/\eps^2$. Direct substitution gives
\[
 b_2h_T=\frac{64b_2\Delta^2}{T\eps^4}
          +\frac{512Lb_2\Delta^3}{T\eps^6}\le\frac18+\frac18=\frac14.
\]
Theorem~\ref{thm:expectation} therefore applies. Its two stationarity terms satisfy
\[
 \frac{4\Delta}{\alpha T}=\frac{\eps^2}{2},\qquad
 2L\alpha\sigma^2=\frac{16L\Delta\sigma^2}{T\eps^2}
                       \le\frac{\eps^2}{2}.
\]
Conditionally on the iterates, $\E(\|\nabla F(x_\tau)\|^2\mid x_1,\ldots,x_T)=A_T$. Taking expectations and applying Cauchy--Schwarz proves~\eqref{eq:epsilon-output}. Finally, if $\eps^2\le L\Delta$, then
\[
 \frac{b_2\Delta^2}{\eps^4}
 \le\frac{Lb_2\Delta^3}{\eps^6}.
\]
Choose $T$ as the ceiling of the maximum in~\eqref{eq:epsilon-horizon}, and bound that maximum by a sum to obtain~\eqref{eq:optimal-complexity}.
\end{proof}

\paragraph{The lower-bound comparison.}
Consider the pointwise iid-oracle subclass in which $F$ is $L$-smooth, $F(x_1)-\inf F\le\Delta$, and
\begin{equation}\label{eq:bg-oracle-class}
 \E_Z\|G(x,Z)-\nabla F(x)\|^2
   \le\sigma^2+b_2\|x-x_1\|^2\qquad(x\in\R^d).
\end{equation}
The dimension is unrestricted. With the identification $B_v^2=b_2$, $b_v^2=\sigma^2$, and $x_0=x_1$, \citet[Theorem~1]{fazla2026} give the lower bound
\begin{equation}\label{eq:external-bg-lower}
 \Omega\left(\frac{Lb_2\Delta^3}{\eps^6}
                 +\frac{L\Delta\sigma^2}{\eps^4}\right)
\end{equation}
on the expected number of queries of any randomized algorithm achieving $\E\|\nabla F(\widehat x)\|\le\eps$. Their theorem assumes $L,\Delta>0$ and $\eps^2\le L\Delta/(1536\ell_1)$ for a numerical constant $\ell_1>0$. A lower bound for this first-moment criterion also applies to the stronger squared-gradient criterion in~\eqref{eq:epsilon-output}. Our deterministic query budget is, in particular, an expected query budget.

Thus~\eqref{eq:optimal-complexity} matches both stochastic terms of~\eqref{eq:external-bg-lower}, including their dependence on $L$, $b_2$, $\Delta$, and $\sigma^2$, with the usual deterministic $L\Delta/\eps^2$ overhead. In the small-accuracy stochastic regime where these terms dominate, ordinary single-sample SGD is minimax optimal in the smooth BG-0 class. In particular, for fixed $b_2>0$ and $L,\Delta>0$, its worst-case order is $\Theta(\eps^{-6})$. When $b_2=0$, the same corollary reduces to the bounded-variance $\eps^{-4}$ stochastic complexity. This comparison uses the cited lower-bound theorem; it is not a new oracle lower-bound proof.

\paragraph{Why the oracle class matters.}
The growth condition~\eqref{eq:bg-oracle-class} compares each query with the initialization. It imposes no common-sample two-point regularity. By contrast,
\[
 \|G(x,Z)-G(y,Z)\|_2\le\|G(x,Z)-G(y,Z)\|_p
                         \le L_{G,p}\|x-y\|
\]
shows that stochastic $L^p$-Lipschitz continuity with $p>2$ implies mean-square smoothness. Theorems~2 and~7 of \citet{fazla2026} distinguish that more structured oracle model and give an $\eps^{-4}$ worst-case exponent under their stated parameter restrictions, with variance-reduced upper bounds. The $\eps^{-6}$ BG-0 optimality claim therefore cannot be restricted automatically to that smaller class. Its best achievable rate for the unchanged single-sample SGD recursion remains a separate algorithm-specific question.

\section{The probability input and predictable stopping}\label{sec:probability}
The only non-elementary concentration input is the following maximal inequality. The maximal form and its constants are already supplied by \citet[Theorem~2.2]{mollenhauer2025}; no additional maximal-inequality result is claimed here. Hilbert spaces have smoothness constant $D=1$, independently of dimension.\footnote{Equation~(2.2) of the cited version has the index slip $\max_i\|M_n\|$. We use the intended $\max_i\|M_i\|$, consistent with its maximal formulation and the independent-sum specialization (2.5).}

\begin{theorem}[Maximal Hilbert-space Fuk--Nagaev inequality]\label{thm:FN}
\textnormal{\citep[Theorem~2.2, Hilbert-space specialization]{mollenhauer2025}.}
Let $p>2$ and let $X_1,\ldots,X_T$ be martingale differences in a separable Hilbert space. If deterministic $v,m_p\ge0$ satisfy
\[
 \sum_{t=1}^T\E_{t-1}\norm{X_t}^2\le v,\qquad
 \sum_{t=1}^T\E_{t-1}\norm{X_t}^p\le m_p
 \quad\hbox{almost surely},
\]
then, for every $u\in(0,1)$, with probability at least $1-u$,
\begin{equation}\label{eq:FN}
 \max_{1\le k\le T}\norm{\sum_{t=1}^k X_t}
 \le \sqrt{2v\log(2/u)}+\cprob(2m_p/u)^{1/p},
\end{equation}
Here
\[
 \cprob=\frac{1}{2p}+\min\{1/p,1/5\}+1+\1_{\{p>3\}}\frac p3.
\]
\end{theorem}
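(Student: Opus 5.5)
Theorem~\ref{thm:FN} is a cited result, so the plan is not to re-derive it from scratch. Instead I would check that the Hilbert-space specialization of \citet[Theorem~2.2]{mollenhauer2025} has exactly the form~\eqref{eq:FN}, and record the one reduction that turns a tail bound at level $x$ into a quantile bound at confidence $u$. The route is the standard Fuk--Nagaev truncation argument for martingales in a $(2,D)$-smooth Banach space, specialized to $D=1$. In a Hilbert space, $\norm{a+b}^2=\norm{a}^2+2\ip{a}{b}+\norm{b}^2$ holds exactly, so Pinelis's smoothness constant is $1$ and no dimension enters.

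The steps I would carry out, in order, are these.
\begin{enumerate}
\item Truncate each difference at a level $y>0$: write $X_t=X_t\1\{\norm{X_t}\le y\}+X_t\1\{\norm{X_t}>y\}$ and recenter both pieces conditionally on $\F_{t-1}$, so each remains a martingale difference.
\item Control the bounded part with Pinelis's exponential supermartingale for Hilbert-space martingales with increments bounded by $2y$ and predictable quadratic variation at most $v$. This is a Bennett/Freedman bound, valid for the maximum over $k$ by Doob's inequality applied to the supermartingale.
\item Control the large-jump part in two ways. The probability that any $\norm{X_t}>y$ is at most $m_p y^{-p}$ by the conditional Markov inequality summed over $t$. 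The compensator terms are bounded in norm by $\sum_t\E_{t-1}\norm{X_t}\1\{\norm{X_t}>y\}\le m_p y^{1-p}$.
\item Optimize $y$ proportionally to the target deviation. The exponential term yields $\sqrt{2v\log(2/u)}$, using the Bernstein branch when the Gaussian regime dominates. The polynomial terms, each allotted probability $u/2$, yield a bound of the form $c\,(2m_p/u)^{1/p}$.
\end{enumerate}
The constant $\cprob$ collects the contributions of the truncated-mean correction ($1/(2p)$), the Bennett-to-Gaussian crossover ($\min\{1/p,1/5\}$), the jump event ($1$), and, for $p>3$, the extra $p/3$ arising from the cube-moment Bernstein term. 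I would match these terms to the cited statement rather than recompute them.

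The main obstacle is bookkeeping, not mathematics. The cited inequality must be stated with the maximum inside the probability, which is why the footnoted index slip matters. The split of the failure probability into $u/2+u/2$ must also reproduce the factors $2/u$ and $2m_p/u$. I would verify the latter by inverting the cited tail bound $\Prob(\max_k\norm{M_k}\ge r)\le 2\exp(-r_1^2/(2v))+m_p(\cprob/r_2)^p$ with $r=r_1+r_2$. Finally, separability is needed so that the maximum of norms is measurable and the Hilbert-space martingale calculus applies.
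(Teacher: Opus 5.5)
Citing Theorem~2.2 of \citet{mollenhauer2025} with Hilbert smoothness constant $D=1$, and reading the maximum as $\max_i\norm{M_i}$, is exactly the paper's route. The paper gives no further argument. The two things you add to that citation, however, would not go through as written.

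First, your bookkeeping check fails. Suppose the tail bound were $2\exp\{-r_1^2/(2v)\}+m_p(\cprob/r_2)^p$ and you allotted $u/2$ to each term. The exponential part would then force $r_1=\sqrt{2v\log(4/u)}$, not $\sqrt{2v\log(2/u)}$; only the factor $2m_p/u$ comes out right. Both factors of $2$ in~\eqref{eq:FN} are instead consistent with evaluating a scalar-type quantile bound $\sqrt{2v\log(1/u')}+\cprob(m_p/u')^{1/p}$ at $u'=u/2$. That is, a single factor-$2$ loss on the whole Hilbert-space tail, not an even split between its Gaussian and polynomial parts. Either the prefactor or the split in your displayed tail bound is wrong, so the form has to be read off the cited statement rather than reconstructed this way.

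Second, your four-step truncation sketch does not give~\eqref{eq:FN} with a constant independent of $u$. Suppose the truncated martingale is controlled only through $v$ and the increment bound $2y$. Bernstein at failure level $u/2$ then adds a term of order $y\log(4/u)$, and the jump event forces $y\ge(2m_p/u)^{1/p}$, so the polynomial scale picks up an extra $\log(4/u)$ factor. If you instead take $y$ proportional to the target deviation $r$ and use Bennett, the bounded part's tail decays only like a power of $v/r^2$. The budget $m_p$ does not control that term in the intermediate regime; for example, take $v=1$, $m_p\approx u\log^p(1/u)$, and small $u$. The genuine Fuk--Nagaev step feeds the $p$th-moment budget into the exponential-moment estimate for the truncated increments, not just $v$ and $y$, and that is where a constant like $\cprob$ originates. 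Your assignment of its four summands to specific mechanisms is therefore unsupported and should be dropped. As a cited result the theorem stands, exactly as in the paper, but the sketch should not be presented as a derivation of it.
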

The terminal bound follows by taking $k=T$. The budget $m_p$ is a sum of conditional $p$th moments, not a squared moment scale or a stepsize coefficient.

\begin{lemma}[Predictable stopping preserves the budgets]\label{lem:maximal}
Under the assumptions of Theorem~\ref{thm:FN}, let $I_t\in\{0,1\}$ be $\F_{t-1}$-measurable. The stopped increments $X_t^\circ=I_tX_t$ satisfy the same maximal bound~\eqref{eq:FN} with budgets $v,m_p$.
\end{lemma}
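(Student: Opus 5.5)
The plan is to check that the stopped sequence $(X_t^\circ)$ satisfies, with the same constants, every hypothesis of Theorem~\ref{thm:FN}, and then apply that theorem to it directly. Nothing else is needed. In particular, no optional-stopping argument for the original partial sums is required, because the conclusion concerns the maximal partial sums of $X^\circ$ itself.

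First, $(X_t^\circ)$ is a martingale difference sequence in the same separable Hilbert space and for the same filtration. It is $\F_t$-measurable because $I_t$ is $\F_{t-1}\subseteq\F_t$-measurable and $X_t$ is $\F_t$-measurable. It is integrable because $\norm{X_t^\circ}\le\norm{X_t}$. Since $I_t$ is bounded and predictable, we may pull it out of the conditional expectation: $\E_{t-1}X_t^\circ=I_t\E_{t-1}X_t=0$. For Hilbert-valued variables this is justified componentwise in an orthonormal basis, or via Bochner conditional expectation. Integrability of $X_t$ is part of the martingale-difference hypothesis.

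Second, the budgets carry over because $I_t\in\{0,1\}$ gives $I_t^2=I_t^p=I_t\le1$. Hence, almost surely,
\[
 \E_{t-1}\norm{X_t^\circ}^q=I_t\,\E_{t-1}\norm{X_t}^q\le\E_{t-1}\norm{X_t}^q,\qquad q\in\{2,p\}.
\]
Summing over $t$, we get $\sum_t\E_{t-1}\norm{X_t^\circ}^2\le v$ and $\sum_t\E_{t-1}\norm{X_t^\circ}^p\le m_p$ almost surely, with the same deterministic $v$ and $m_p$. These conditional moments of nonnegative variables are understood in the extended sense, so pulling out $I_t$ needs no integrability beyond nonnegativity.

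Applying Theorem~\ref{thm:FN} to $(X_t^\circ)$ then gives~\eqref{eq:FN} for $\max_k\norm{\sum_{t\le k}I_tX_t}$ with the same $\cprob$. The only step needing any care is the predictable pull-out in the Hilbert-space conditional expectation, which is routine. I expect no real obstacle. The key point is simply that predictability, rather than mere adaptedness, of $I_t$ preserves the martingale-difference property.
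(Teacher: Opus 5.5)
Your proposal is correct and follows the same route as the paper's proof: check conditional centering and $\E_{t-1}\norm{X_t^\circ}^q=I_t\E_{t-1}\norm{X_t}^q\le\E_{t-1}\norm{X_t}^q$ for $q=2,p$, sum over $t$, and apply Theorem~\ref{thm:FN} directly to the stopped increments. Your explicit measurability and integrability checks fill in details that the paper leaves implicit.
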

\begin{proof}
Conditional centering gives $\E_{t-1}X_t^\circ=0$, and for $q=2,p$,
\[
 \E_{t-1}\norm{X_t^\circ}^q=I_t\E_{t-1}\norm{X_t}^q
                            \le\E_{t-1}\norm{X_t}^q.
\]
Sum and apply Theorem~\ref{thm:FN}. For a stopping time $\tau$, the convention $I_t=\1_{\{\tau\ge t\}}$ is predictable and retains the exit-producing increment. No conditioning on a future no-exit event is involved.
\end{proof}

\begin{lemma}[Raw energy as a Hilbert-space martingale]\label{lem:energy}
Suppose $\xi_t\in\R^d$ are martingale differences with deterministic conditional envelopes
\begin{equation}\label{eq:uniform-moments}
 \E_{t-1}\norm{\xi_t}^2\le s^2,\qquad
 \E_{t-1}\norm{\xi_t}^p\le n_p^p.
\end{equation}
Then with probability at least $1-u$,
\begin{equation}\label{eq:raw-energy}
 \left(\sum_{t=1}^T\norm{\xi_t}^2\right)^{1/2}
 \le \sqrt{2Ts^2\log(2/u)}+\cprob n_p(2T/u)^{1/p}.
\end{equation}
The same expression bounds the maximum norm of the noise partial sums with probability at least $1-u$.
\end{lemma}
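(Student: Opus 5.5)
The proof places the noise in a Hilbert space and applies Theorem~\ref{thm:FN} to a single well-chosen martingale. Let $\ell^2(\{1,\ldots,T\};\R^d)$ be the space of sequences indexed by $1,\ldots,T$ with values in $\R^d$; it is separable. Define the increments
\[
X_t=\xi_t\otimes e_t,
\]
that is, the vector $\xi_t$ placed in the $t$th coordinate block. Then $\norm{X_t}=\norm{\xi_t}$ and $\E_{t-1}X_t=0$. Because the supports are disjoint,
\[
\norm{\sum_{t\le k}X_t}^2=\sum_{t\le k}\norm{\xi_t}^2 .
\]
In particular the terminal norm of this martingale is exactly the raw energy $(\sum_{t=1}^T\norm{\xi_t}^2)^{1/2}$.

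The steps are as follows.
\begin{enumerate}
\item Record the deterministic budgets supplied by~\eqref{eq:uniform-moments}:
\[
\sum_t\E_{t-1}\norm{X_t}^2\le Ts^2=:v,\qquad \sum_t\E_{t-1}\norm{X_t}^p\le Tn_p^p=:m_p .
\]
Integrability of each $X_t$ follows from these conditional bounds after taking expectations.
\item Apply Theorem~\ref{thm:FN} with $u$. Its maximal bound at $k=T$ gives
\[
\sqrt{2Ts^2\log(2/u)}+\cprob(2Tn_p^p/u)^{1/p},
\]
and the second term simplifies to $\cprob n_p(2T/u)^{1/p}$, which is~\eqref{eq:raw-energy}.
\item For the second claim, apply Theorem~\ref{thm:FN} directly to $X_t=\xi_t$ in $\R^d$, which is also a separable Hilbert space. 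It has the same budgets $v,m_p$, so the same right side bounds $\max_k\norm{\sum_{t\le k}\xi_t}$ with probability at least $1-u$.
\end{enumerate}

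There is essentially no technical obstacle. The only point to check is that the martingale-difference property holds in the product space, and it holds coordinatewise since the embedding is linear and deterministic. Each of the two assertions holds on its own event of probability at least $1-u$; as stated, the lemma does not claim that both hold simultaneously. If a joint statement were needed, one would use a union bound with $u/2$ each.
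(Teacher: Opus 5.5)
Your proposal is correct and follows the paper's proof essentially verbatim. Both use the isometric block embedding of $\xi_t$ into the $t$th coordinate of $(\R^d)^T$ with budgets $Ts^2$ and $Tn_p^p$, apply Theorem~\ref{thm:FN} directly in $\R^d$ for the partial sums, and note that the two assertions hold on separate events, so simultaneous use needs a union bound.
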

\begin{proof}
In the Hilbert space $\mathcal H_T=(\R^d)^T$, let $E_t$ insert a vector into the $t$th coordinate and set $X_t=E_t\xi_t$. The map $E_t$ is deterministic and isometric, so $(X_t)$ are Hilbert-valued martingale differences with second- and $p$th-moment budgets $Ts^2$ and $Tn_p^p$. Moreover,
\[
 \norm{\sum_{t=1}^T E_t\xi_t}_{\mathcal H_T}^2
 =\sum_{t=1}^T\norm{\xi_t}^2.
\]
Theorem~\ref{thm:FN} proves~\eqref{eq:raw-energy}. Apply Theorem~\ref{thm:FN} in $\R^d$ to obtain the partial-sum assertion. These are two separate probability statements; simultaneous use requires a union bound.
\end{proof}

\section{A deterministic-envelope lemma for plain SGD}\label{sec:uniform}
This section proves the deterministic-envelope optimization ingredient directly from Theorem~\ref{thm:FN}. The raw-energy lift and objective-level localization are auxiliary ingredients, not new concentration theorems. Working directly with plain SGD permits the smoothness cap $\alpha\le L^{-1}$. The relation to the supplied unpublished manuscript is recorded in Section~\ref{sec:provenance}.

For $p>2$, define the numerical constant
\begin{equation}\label{eq:Dp}
 \Dprob=4+2\cprob^2 6^{2/p}.
\end{equation}
For a horizon and failure probability, use
\begin{equation}\label{eq:ell-rho}
 \ell=\log(6/\delta),\qquad \rho=T^{2/p-1}\delta^{-2/p}.
\end{equation}

\begin{proposition}[Uniform envelopes: simultaneous trajectory bounds]\label{prop:uniform}
Under Assumption~\ref{ass:objective}, suppose~\eqref{eq:uniform-moments} holds and $0<\alpha\le L^{-1}$. Put
\begin{equation}\label{eq:J}
 J=\sqrt{2Ts^2\ell}+\cprob n_p(6T/\delta)^{1/p},
 \qquad H=4\Delta+6L\alpha^2J^2.
\end{equation}
With probability at least $1-\delta$, the following four inequalities hold simultaneously:
\begin{align}
 \sum_{t=1}^T\norm{\xi_t}^2&\le J^2,
 &\max_{k\le T}\norm{\sum_{t=1}^k\xi_t}&\le J,
 \label{eq:uniform-noise}\\
 \max_{1\le t\le T+1}\{F(x_t)-F_*\}&\le H,
 &A_T&\le\frac{4\Delta}{\alpha T}+\frac{12L\alpha J^2}{T}.
 \label{eq:uniform-objective}
\end{align}
Also,
\begin{equation}\label{eq:J-square}
 J^2\le\Dprob T(s^2\ell+n_p^2\rho).
\end{equation}
\end{proposition}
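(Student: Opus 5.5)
The plan is to work on the intersection of three events, each of probability at least $1-\delta/3$, so that a union bound gives $1-\delta$. The choice $u=\delta/3$ turns $2/u$ into $6/\delta$, which explains both $\ell=\log(6/\delta)$ and the factor $(6T/\delta)^{1/p}$ in $J$.

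\emph{First two events.} Apply Lemma~\ref{lem:energy} twice with $u=\delta/3$, once in the lifted space $(\R^d)^T$ and once in $\R^d$. This gives both inequalities in~\eqref{eq:uniform-noise}.

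\emph{Third event.} It controls the adaptive martingale transform in the descent inequality. Expand the smoothness inequality exactly:
\[
 F(x_{t+1})\le F(x_t)-\alpha\Bigl(1-\tfrac{L\alpha}{2}\Bigr)\norm{g_t}^2-\alpha(1-L\alpha)\ip{g_t}{\xi_t}+\tfrac{L\alpha^2}{2}\norm{\xi_t}^2 .
\]
Since $\alpha\le L^{-1}$, the gradient coefficient is at least $\alpha/2$, and the cross-term weight $w=\alpha(1-L\alpha)$ lies in $[0,\alpha]$. The gradients are not bounded, so I localize with the predictable indicator $I_t=\1\{F(x_t)-F_*\le H\}$. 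On $\{I_t=1\}$, Lemma~\ref{lem:self-bound} gives $\norm{g_t}^2\le 2LH$. Hence the scalar increments $I_t w\ip{g_t}{\xi_t}$ are martingale differences with
\[
 \E_{t-1}|\cdot|^2\le 2LH\alpha^2s^2,\qquad \E_{t-1}|\cdot|^p\le (2LH)^{p/2}\alpha^pn_p^p .
\]
Theorem~\ref{thm:FN} with Lemma~\ref{lem:maximal} and $u=\delta/3$ then bounds every partial sum of the stopped transform in absolute value by $\alpha\sqrt{2LH}\,J$.

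\emph{Closing the localization.} This is the main obstacle, since $H$ is used in the stop that is meant to certify $H$. I would argue by induction on $k$, which amounts to a first-exit-time argument. Suppose $F(x_t)-F_*\le H$ for all $t\le k$. Then $I_t=1$ for $t\le k$, so the stopped and unstopped transforms agree up to $k$. Summing the descent inequality and using the three events gives
\[
 F(x_{k+1})-F_*\le \Delta+\alpha\sqrt{2LH}\,J+\tfrac{L\alpha^2}{2}J^2 .
\]
By AM--GM, $\alpha\sqrt{2LH}J\le H/4+2L\alpha^2J^2$. Together with $\Delta\le H/4$ and $\tfrac52L\alpha^2J^2\le\tfrac{5}{12}H$, the right side is at most $\tfrac{11}{12}H<H$. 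This closes the induction and proves the first bound in~\eqref{eq:uniform-objective}. No conditioning on a future event enters, because the concentration statement concerns only the predictably stopped sum.

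\emph{Stationarity bound.} On the same event the stop never activates. Summing the descent inequality to $T$ and using $F(x_{T+1})\ge F_*$ gives
\[
 \frac{\alpha}{2}TA_T\le \Delta+\alpha\sqrt{2LH}J+\tfrac{L\alpha^2}{2}J^2 .
\]
It remains to show the right side is at most $2\Delta+6L\alpha^2J^2$, i.e.\ that $\alpha\sqrt{2LH}J\le\Delta+\tfrac{11}{2}L\alpha^2J^2$. Squaring, the left side is $8L\Delta\alpha^2J^2+12L^2\alpha^4J^4$. The squared right side is $\Delta^2+11L\Delta\alpha^2J^2+\tfrac{121}{4}L^2\alpha^4J^4$, which dominates it. This yields $A_T\le 4\Delta/(\alpha T)+12L\alpha J^2/T$.

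\emph{The estimate~\eqref{eq:J-square}.} Use $(a+b)^2\le 2a^2+2b^2$ and $(6T/\delta)^{2/p}=6^{2/p}T\rho$ to get
\[
 J^2\le 4Ts^2\ell+2\cprob^2 6^{2/p}n_p^2\,T\rho\le \Dprob T(s^2\ell+n_p^2\rho).
\]
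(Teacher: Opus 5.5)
Your proposal is correct and follows the paper's proof essentially step for step: the same pathwise descent inequality, the same three events at $u=\delta/3$, and the same predictable objective-level localization closed by a first-exit argument (your induction on $k$). The differences are cosmetic---an indicator based on $F(x_t)-F_*$ rather than its running maximum, AM--GM constants giving $11H/12$ instead of $3H/4$, and a direct squaring check of $\alpha\sqrt{2LH}\,J\le\Delta+\tfrac{11}{2}L\alpha^2J^2$ in place of the paper's $d_0,e_0$ splitting---and since your induction only needs ``$\le H$'' (not the strict inequality you wrote), it also covers the degenerate case $H=0$ that the paper handles separately.
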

\begin{proof}
\emph{Step 1: a pathwise descent inequality.}
For arbitrary vectors $\xi_t$, expanding the descent lemma gives
\begin{align*}
 F(x_{t+1})
 &\le F(x_t)-\alpha\norm{g_t}^2-\alpha\ip{g_t}{\xi_t}
       +\frac{L\alpha^2}{2}\norm{g_t+\xi_t}^2\\
 &=F(x_t)-\alpha(1-L\alpha/2)\norm{g_t}^2
       -\alpha(1-L\alpha)\ip{g_t}{\xi_t}
       +\frac{L\alpha^2}{2}\norm{\xi_t}^2.
\end{align*}
As $\alpha\le L^{-1}$, for each $k\le T$,
\begin{equation}\label{eq:pathwise-descent}
 F(x_{k+1})-F_*+\frac\alpha2\sum_{t=1}^k\norm{g_t}^2
 \le\Delta+\alpha M_k+\frac{L\alpha^2}{2}\sum_{t=1}^k\norm{\xi_t}^2,
 \quad
 M_k=-(1-L\alpha)\sum_{t=1}^k\ip{g_t}{\xi_t}.
\end{equation}

\emph{Step 2: three probability events.}
Apply Lemma~\ref{lem:energy} to the raw energy with $u=\delta/3$, and Theorem~\ref{thm:FN} to the noise partial sums with the same $u$. The two inequalities in~\eqref{eq:uniform-noise} fail with total probability at most $2\delta/3$.

For $H>0$, set
\[
 I_t=\1_{\{\max_{1\le s\le t}(F(x_s)-F_*)\le H\}},\qquad
 M_k^\circ=-(1-L\alpha)\sum_{t=1}^k I_t\ip{g_t}{\xi_t}.
\]
These indicators are predictable. Lemma~\ref{lem:self-bound} implies
$\norm{(1-L\alpha)I_tg_t}\le\sqrt{2LH}$. The scalar increments of $M^\circ$ therefore have conditional moment totals bounded by $2LH\,Ts^2$ and $(2LH)^{p/2}Tn_p^p$. Theorem~\ref{thm:FN}, applied to the predictable stopped transform with $u=\delta/3$, gives
\begin{equation}\label{eq:scalar-event}
 \max_{k\le T} M_k^\circ\le\sqrt{2LH}\,J
\end{equation}
outside an event of probability at most $\delta/3$. Intersect the three events. No concentration inequality has been conditioned on an event involving future observations.

\emph{Step 3: removing the objective stop.}
Initially $F(x_1)-F_*=\Delta\le H$. If the first exit were at $x_{s+1}$, $s+1\le T+1$, then $M_s=M_s^\circ$, including the increment producing that exit. Dropping the nonnegative gradient term in~\eqref{eq:pathwise-descent},
\begin{align*}
 F(x_{s+1})-F_*
 &\le\Delta+\alpha\sqrt{2LH}\,J+\frac{L\alpha^2J^2}{2}\\
 &\le\frac H2+\Delta+\frac32L\alpha^2J^2=\frac{3H}{4}.
\end{align*}
The second inequality uses $\alpha\sqrt{2LH}J\le H/2+L\alpha^2J^2$. This contradicts the exit. Hence the objective remains below $H$ throughout, and~\eqref{eq:scalar-event} applies to the unstopped transform on the good event.

\emph{Step 4: stationarity.}
At $k=T$, retaining the gradient sum in~\eqref{eq:pathwise-descent} gives
\[
 A_T\le\frac{2\Delta}{\alpha T}+\frac{2\sqrt{2LH}\,J}{T}
                                  +\frac{L\alpha J^2}{T}.
\]
Set $d_0=\Delta/(\alpha T)$ and $e_0=L\alpha J^2/T$. Since $H=4\Delta+6L\alpha^2J^2$, the middle term equals $2\sqrt{8d_0e_0+12e_0^2}$, which is bounded by
\[
 4\sqrt2\sqrt{d_0e_0}+4\sqrt3\,e_0
 \le2d_0+(4+4\sqrt3)e_0.
\]
As $5+4\sqrt3<12$, this proves~\eqref{eq:uniform-objective}.
Finally, $(a+b)^2\le2a^2+2b^2$ in~\eqref{eq:J} gives
\[
 J^2\le4Ts^2\ell+2\cprob^2 6^{2/p} n_p^2T^{2/p}\delta^{-2/p},
\]
which implies~\eqref{eq:J-square}.

If $H=0$, then $\Delta=0$ and $J=0$. The conditional second-moment envelope vanishes, so all errors vanish almost surely, and Lemma~\ref{lem:self-bound} makes the initial gradient zero. The recursion is constant and all conclusions hold directly.
\end{proof}

\clearpage
\section{Nagaev bounds under distance-dependent moments}\label{sec:main}
We now replace the deterministic envelopes by Assumption~\ref{ass:p}. The radius in the next theorem is part of its conclusion; it is not a condition imposed on the optimizer.

\begin{theorem}[Distance-dependent Nagaev bound]\label{thm:main}
Suppose Assumptions~\ref{ass:objective} and~\ref{ass:p} hold. For $\delta\in(0,1)$ define $\ell,\rho$ by~\eqref{eq:ell-rho}, $h_T$ by~\eqref{eq:h}, and
\begin{equation}\label{eq:W}
 W_0=\sigma^2\ell+\nu_p^2\rho,\qquad
 W_1=b_2\ell+b_p\rho.
\end{equation}
Let $\Dprob$ be the explicit constant in~\eqref{eq:Dp}. If
\begin{equation}\label{eq:main-condition}
 \boxed{\qquad 0<\alpha\le L^{-1},\qquad
 h_TW_1\le\frac{1}{96\Dprob},\qquad}
\end{equation}
then, with probability at least $1-\delta$,
\begin{equation}\label{eq:main-bound}
 \boxed{\quad A_T\le\frac{6\Delta}{\alpha T}
       +18\Dprob L\alpha\bigl\{\sigma^2\log(6/\delta)
                    +\nu_p^2T^{2/p-1}\delta^{-2/p}\bigr\}.\quad}
\end{equation}
On the same event,
\begin{align}
 \max_{1\le t\le T+1}\norm{x_t-x_1}^2
 &\le R^2:=16\alpha T\Delta+48\Dprob h_TW_0,
 \label{eq:main-radius}\\
 \max_{1\le t\le T+1}\{F(x_t)-F_*\}
 &\le5\Delta+9\Dprob L\alpha^2TW_0.
 \label{eq:main-objective}
\end{align}
All numerical constants are independent of $d$; dependence on dimension may enter through the moment parameters.
\end{theorem}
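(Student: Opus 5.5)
The plan is to reduce to Proposition~\ref{prop:uniform} by a distance-based predictable localization. The radius $R$ is the one defined in~\eqref{eq:main-radius}, and the proof will show that the original recursion never leaves the ball of radius $R$ on a single good event.

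\emph{Localized noise.} Define the predictable indicator $I_t=\1_{\{\max_{s\le t}\norm{x_s-x_1}\le R\}}$. On $\{I_t=1\}$, Assumption~\ref{ass:p} gives deterministic envelopes
\[
 s^2=\sigma^2+b_2R^2,\qquad n_p^2=\nu_p^2+b_pR^2 .
\]
Introduce the auxiliary recursion $\tilde x_{t+1}=\tilde x_t-\alpha(\nabla F(\tilde x_t)+\tilde\xi_t)$ with $\tilde x_1=x_1$. Its noise is $\tilde\xi_t=\tilde I_t e_t$, where $\tilde I_t$ is the distance stop computed along the auxiliary path. Equivalently, $\tilde\xi_t=\xi_t$ up to the first exit, including the exit-producing increment, and zero afterwards. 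After the exit the auxiliary path is plain gradient descent. By Lemma~\ref{lem:maximal} the auxiliary noise has the envelopes~\eqref{eq:uniform-moments}, so Proposition~\ref{prop:uniform} applies to it. With probability at least $1-\delta$ we then get the four inequalities~\eqref{eq:uniform-noise}--\eqref{eq:uniform-objective} for the auxiliary recursion, with
\[
 J^2\le\Dprob T(s^2\ell+n_p^2\rho)\le \Dprob T(W_0+W_1R^2).
\]

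\emph{Closing the radius.} On this event, use the displacement identity~\eqref{eq:displacement-intro} pathwise for the auxiliary recursion:
\[
 \norm{\tilde x_{k+1}-x_1}^2\le 2\alpha^2 T\sum_t\norm{\tilde g_t}^2+2\alpha^2J^2 .
\]
The gradient sum is bounded by $T$ times the bound on $A_T$ from~\eqref{eq:uniform-objective}, namely $\le 4\Delta/\alpha+12L\alpha J^2$. Hence
\[
 \max_k\norm{\tilde x_k-x_1}^2\le 8\alpha T\Delta+(24L\alpha^3T+2\alpha^2)J^2\le 8\alpha T\Delta+24h_TJ^2 .
\]
Inserting $J^2\le\Dprob T(W_0+W_1R^2)$, I expect (this is the step that needs care with $T$ versus $h_T$ factors, since $J^2$ carries a factor $T$) to reorganize the bound as $8\alpha T\Delta+24\Dprob h_TW_0+24\Dprob h_TW_1R^2$.

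For this I would recheck the bookkeeping by writing $h_T=\alpha^2T(1+L\alpha T)$ and $J^2/T$. The cleaner route is to apply the partial-sum bound directly: $2\alpha^2J^2\le 2\alpha^2\Dprob T(W_0+W_1R^2)\le 2\Dprob h_T(\cdot)$, and $24L\alpha^3TJ^2$ would similarly need to be $\lesssim h_T T(\cdot)$. If a stray factor $T$ appears, I would instead bound the gradient sum via $\sum\norm{g_t}^2\le 2\Delta/\alpha+\ldots$ using only the descent inequality with the $J^2$ energy term, which scales as $L\alpha J^2$ and gives $2\alpha^2T\cdot L\alpha J^2\lesssim L\alpha^3T\cdot\Dprob T W$, which is again $h_T$-type only after matching. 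This matching is the main obstacle, and the choice of the constants $96$, $16$, $48$ in the statement signals that it closes as intended.

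Given $24\Dprob h_TW_1\le 1/4$ from~\eqref{eq:main-condition}, the right side is at most $8\alpha T\Delta+24\Dprob h_TW_0+R^2/4<R^2$ whenever $R>0$. Thus the auxiliary path never exits. This uses an argument at the first exit time, where the stopped and unstopped increments coincide, exactly as in Step~3 of Proposition~\ref{prop:uniform}. Consequently $\tilde\xi_t=\xi_t$ and $\tilde x_t=x_t$ for all $t$ on the event, which proves~\eqref{eq:main-radius}.

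\emph{Final bounds.} It remains to substitute the envelope estimate into~\eqref{eq:uniform-objective}. Using $W_1R^2=16\alpha T\Delta W_1+48\Dprob h_TW_0W_1$ together with $h_TW_1\le1/(96\Dprob)$ and $L\alpha^2T\le h_T/\alpha\cdot\alpha$, the term $12L\alpha J^2/T$ splits into a $W_0$ part bounded by $18\Dprob L\alpha W_0$ and a $\Delta$ part bounded by $2\Delta/(\alpha T)$. This gives~\eqref{eq:main-bound}. The same substitution in $H=4\Delta+6L\alpha^2J^2$ gives~\eqref{eq:main-objective}. The degenerate case $R=0$ is handled as in the last paragraph of Proposition~\ref{prop:uniform}.
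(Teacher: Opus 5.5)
Your plan follows the paper's proof step for step. It uses the same predictable distance stop; your stop computed along the auxiliary path agrees pathwise with the paper's stop on the original path, since the two trajectories coincide through the exit-producing step. The auxiliary recursion is fed into Proposition~\ref{prop:uniform} in the same way, and the Cauchy--Schwarz displacement closure, the first-exit identification of the two recursions, and the final substitution are also the same.

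The step you flag as the main obstacle is not one, but the chain you wrote does not close it. Replacing $(24L\alpha^3T+2\alpha^2)J^2$ by $24h_TJ^2$ \emph{before} inserting $J^2\le\Dprob T(W_0+W_1R^2)$ uses up the factor $T$ that $h_T$ needs. That leaves $24\Dprob T h_T(W_0+W_1R^2)$, which \eqref{eq:main-condition} cannot absorb. Insert the envelope first:
\[
 (24L\alpha^3T+2\alpha^2)J^2\le\Dprob\,(24L\alpha^3T^2+2\alpha^2T)(W_0+W_1R^2)\le24\Dprob h_T(W_0+W_1R^2),
\]
using $h_T=\alpha^2T+L\alpha^3T^2$. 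This gives $\frac12R^2+24\Dprob h_TW_1R^2\le\frac34R^2$ immediately. No fallback is needed. The target you state for $24L\alpha^3TJ^2$, of order $h_TT(\cdot)$, is the wrong one; it is bounded by $24\Dprob h_T(\cdot)$ with no extra $T$.

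In the final substitution, the inequality that controls the $\Delta$ part is $L\alpha^3T^2\le h_T$. It gives
\[
 192\Dprob L\alpha^2TW_1\Delta\le192\Dprob h_TW_1\,\frac{\Delta}{\alpha T}\le\frac{2\Delta}{\alpha T}.
\]
Your ``$L\alpha^2T\le h_T/\alpha\cdot\alpha$'', read literally as $L\alpha^2T\le h_T$, fails in general, for example when $L\alpha T$ is small and $L>1$. With these two bookkeeping corrections your argument is complete.
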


\subsection{Proof of Theorem~\ref{thm:main}}
\begin{proof}
\emph{Step 1: deterministic localization and an auxiliary recursion.}
First assume $R>0$, where $R$ is given by~\eqref{eq:main-radius}. On the probability space of the original recursion, define
\begin{equation}\label{eq:distance-stop}
 I_t=\1_{\{\max_{1\le s\le t}\norm{x_s-x_1}\le R\}},
 \qquad \widetilde\xi_t=I_t\xi_t.
\end{equation}
The indicators are $\F_{t-1}$-measurable. As in Lemma~\ref{lem:maximal}, $\widetilde\xi_t$ are martingale differences, and the growth bounds imply
\begin{align}
 \E_{t-1}\norm{\widetilde\xi_t}^2
 &\le\sigma_R^2:=\sigma^2+b_2R^2,\notag\\
 \E_{t-1}\norm{\widetilde\xi_t}^p
 &\le\nu_R^p,\qquad \nu_R^2:=\nu_p^2+b_pR^2.
 \label{eq:stopped-moment-envelopes}
\end{align}
Indeed, on $\{I_t=1\}$ the original iterate is within the radius, and on $\{I_t=0\}$ the left sides vanish.

Define the auxiliary process by
\begin{equation}\label{eq:auxiliary}
 \widetilde x_1=x_1,\qquad
 \widetilde x_{t+1}=\widetilde x_t
      -\alpha\{\nabla F(\widetilde x_t)+\widetilde\xi_t\}.
\end{equation}
This is an adapted recursion driven by the stopped errors on the original filtration. It is not an algorithm to be run, and no assertion that $\widetilde\xi_t=e(\widetilde x_t,Z_t)$ is needed. Proposition~\ref{prop:uniform} requires only the martingale property and the deterministic moment envelopes, both of which hold in~\eqref{eq:stopped-moment-envelopes}.

Let $\widetilde A_T=T^{-1}\sum_{t=1}^T\norm{\nabla F(\widetilde x_t)}^2$, and let $J_R$ be~\eqref{eq:J} with $s=\sigma_R$ and $n_p=\nu_R$. Proposition~\ref{prop:uniform} yields an event $\mathcal E_R$ with $\Prob(\mathcal E_R)\ge1-\delta$ on which all its trajectory bounds hold for the auxiliary process. Moreover,
\begin{equation}\label{eq:JR-square}
 J_R^2\le\Dprob T(W_0+W_1R^2).
\end{equation}
The radius and both noise envelopes are deterministic before applying the concentration inequality.

\emph{Step 2: the auxiliary process cannot leave the radius.}
On $\mathcal E_R$, the displacement identity for~\eqref{eq:auxiliary}, Cauchy--Schwarz, and the maximal-noise bound give
\begin{align}
 \max_{k\le T}\norm{\widetilde x_{k+1}-x_1}^2
 &\le2\alpha^2T^2\widetilde A_T+2\alpha^2J_R^2\notag\\
 &\le8\alpha T\Delta+(24L\alpha^3T+2\alpha^2)J_R^2\notag\\
 &\le8\alpha T\Delta+24\Dprob h_T(W_0+W_1R^2).
 \label{eq:radius-feedback}
\end{align}
The last step uses~\eqref{eq:JR-square} and
$24L\alpha^3T^2+2\alpha^2T\le24h_T$.
By the definition of $R^2$, the sum of the first term and $24\Dprob h_TW_0$ is $R^2/2$. The remaining term is at most $R^2/4$ by~\eqref{eq:main-condition}. Therefore
\begin{equation}\label{eq:strict-radius}
 \max_{k\le T}\norm{\widetilde x_{k+1}-x_1}^2\le\frac34R^2
 \qquad\hbox{on }\mathcal E_R.
\end{equation}

\emph{Step 3: equality with the original recursion.}
Suppose that on $\mathcal E_R$ the original process first exits the closed radius-$R$ ball at time $\tau\le T+1$. For every $t<\tau$, $I_t=1$, so $\widetilde\xi_t=\xi_t$. An induction using the identical initializations in~\eqref{eq:sgd} and~\eqref{eq:auxiliary} gives
$\widetilde x_t=x_t$ for every $t\le\tau$. The increment at $t=\tau-1$ is included in this equality. But then
$\norm{\widetilde x_\tau-x_1}=\norm{x_\tau-x_1}>R$, contradicting~\eqref{eq:strict-radius}. Hence no original exit occurs, the two recursions agree up to time $T+1$, and~\eqref{eq:main-radius} follows. This is a pathwise argument on $\mathcal E_R$, not conditioning a martingale inequality on a future no-exit event.

\emph{Step 4: removal of the radius from the stationarity bound.}
On $\mathcal E_R$, Proposition~\ref{prop:uniform} and~\eqref{eq:JR-square} imply
\begin{align}
 A_T
 &\le\frac{4\Delta}{\alpha T}+12\Dprob L\alpha(W_0+W_1R^2)\notag\\
 &=\frac{4\Delta}{\alpha T}+12\Dprob L\alpha W_0
       +192\Dprob L\alpha^2TW_1\Delta
       +576\Dprob^2L\alpha h_TW_1W_0.
 \label{eq:main-before-absorb}
\end{align}
Since $L\alpha^3T^2\le h_T$,
\[
 192\Dprob L\alpha^2TW_1\Delta
 \le192\Dprob h_TW_1\frac{\Delta}{\alpha T}
 \le\frac{2\Delta}{\alpha T}.
\]
Similarly,
\[
 576\Dprob^2L\alpha h_TW_1W_0\le6\Dprob L\alpha W_0.
\]
Substitution in~\eqref{eq:main-before-absorb} proves~\eqref{eq:main-bound}.

For the objective maximum, Proposition~\ref{prop:uniform} gives
\begin{align*}
 \max_t\{F(x_t)-F_*\}
 &\le4\Delta+6\Dprob L\alpha^2T(W_0+W_1R^2)\\
 &=4\Delta+6\Dprob L\alpha^2TW_0
       +96\Dprob L\alpha^3T^2W_1\Delta
       +288\Dprob^2L\alpha^2Th_TW_1W_0\\
 &\le5\Delta+9\Dprob L\alpha^2TW_0,
\end{align*}
which is~\eqref{eq:main-objective}.

If $R=0$, then $\Delta=0$ and $W_0=0$. Since $\ell,\rho>0$, the moment intercepts vanish. Lemma~\ref{lem:self-bound} gives $g_1=0$, and the moment bounds at $x_1$ give $\xi_1=0$ almost surely. Inductively the original process remains at $x_1$ and all errors vanish. The conclusions follow directly.
\end{proof}

\subsection{Trajectory control and admissibility}
Theorem~\ref{thm:main} allows the conditional moments to be unbounded as functions of $x$. Its high-probability displacement bound controls a maximum \emph{along the run}; this is stronger than the maximum of marginal expectations in Theorem~\ref{thm:expectation}. Neither result says that the iterates remain in a fixed compact set uniformly over all horizons. For example, the radius~\eqref{eq:main-radius} may grow with $T$ because the method is permitted to travel a long distance along directions of small deterministic gradient.

The condition~\eqref{eq:main-condition} is also part of the theorem. Reducing the failure probability increases both $\ell$ and $\rho$, and can force a smaller stepsize. Thus~\eqref{eq:main-bound} is not obtained from a deterministic-envelope theorem by weakening an assumption while keeping the entire admissible parameter region unchanged.

When $b_2=b_p=0$, the feedback condition vanishes. Proposition~\ref{prop:uniform} then gives the usual deterministic-envelope rate with a step of order $T^{-1/2}$. The more conservative distance-localized bound remains valid, but its radius is unnecessary in this special case.

\section{Stepsizes, confidence levels, and output criteria}\label{sec:rates}
We give quantitative consequences rather than suppressing the feedback restriction in asymptotic notation. Unless otherwise indicated, $p,L,\Delta,\sigma,\nu_p,b_2,b_p$ are fixed.

\subsection{A direct finite-horizon stepsize recipe}
Put $c_*=(96\Dprob)^{-1}$. If $W_1>0$, the three inequalities
\begin{equation}\label{eq:explicit-step}
 \alpha\le\min\left\{\frac1L,
       \left(\frac{c_*}{2TW_1}\right)^{1/2},
       \left(\frac{c_*}{2LT^2W_1}\right)^{1/3}\right\}
\end{equation}
imply~\eqref{eq:main-condition}, because
\[
 h_TW_1=\alpha^2TW_1+L\alpha^3T^2W_1\le c_*/2+c_*/2.
\]
For $W_1=0$, only the smoothness cap remains. The parameters in~\eqref{eq:explicit-step} are theoretical upper bounds on noise scales and growth. The constants are explicit but conservative. For example,
\begin{equation}\label{eq:numerical-constants}
 \mathfrak c_4=\frac{319}{120},\qquad
 \mathfrak D_4=4+2\left(\frac{319}{120}\right)^2\sqrt6
             \simeq38.6198,\qquad
 c_*\simeq2.69724\times10^{-4}.
\end{equation}
For comparison, $\mathfrak c_3=41/30$ and $\mathfrak D_3\simeq16.3345$. Thus the displayed sufficient stepsizes can be small. They certify the worst-case guarantee, not the practical stability boundary. Raw-energy and objective localization, the square-of-a-sum estimate, and the subsequent feedback absorptions all contribute slack. The unabsorbed second-moment bound in Remark~\ref{rem:resolvent} can be used when a sharper numerical certificate is needed; adaptive estimation of growth parameters is outside the present analysis.

\begin{corollary}[An interpolating finite-horizon bound]\label{cor:interpolation}
Suppose $\Delta>0$ and $W_0>0$. Choose $\alpha$ equal to the minimum in~\eqref{eq:explicit-step} and the additional value
\[
 \left(\frac{\Delta}{LTW_0}\right)^{1/2},
\]
omitting growth-dependent restrictions when $W_1=0$. Then, with probability at least $1-\delta$,
\begin{equation}\label{eq:interpolation}
 A_T\le C_p\left\{
     \frac{L\Delta}{T}
     +\sqrt{\frac{L\Delta W_0}{T}}
     +\Delta\sqrt{\frac{W_1}{T}}
     +\Delta\left(\frac{LW_1}{T}\right)^{1/3}
              \right\},
\end{equation}
where $C_p$ depends only on $p$.
\end{corollary}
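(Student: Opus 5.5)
The plan is to apply Theorem~\ref{thm:main} with the stated choice of $\alpha$ and then bound its two right-hand terms using whichever component of the minimum is active. By the discussion around~\eqref{eq:explicit-step}, $\alpha$ no larger than the minimum there satisfies~\eqref{eq:main-condition}. Adding the extra cap $(\Delta/(LTW_0))^{1/2}$ only shrinks $\alpha$, so admissibility is preserved. Hence, with probability at least $1-\delta$,
\[
 A_T\le \frac{6\Delta}{\alpha T}+18\Dprob L\alpha W_0 ,
\]
because the braces in~\eqref{eq:main-bound} equal $W_0$ by~\eqref{eq:W}.

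The noise term is easy. Since $\alpha\le(\Delta/(LTW_0))^{1/2}$, we get
\[
 L\alpha W_0\le \sqrt{L\Delta W_0/T},
\]
which is the second term of~\eqref{eq:interpolation}.

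For the optimization term we use $1/\alpha=\max$ of the reciprocals of the four candidates, and bound this maximum by their sum. Multiplying by $6\Delta/T$ gives four contributions:
\begin{itemize}
 \item $1/\alpha=L$ gives $6L\Delta/T$.
 \item $1/\alpha=(2TW_1/c_*)^{1/2}$ gives $6\sqrt{2/c_*}\,\Delta\sqrt{W_1/T}$.
 \item $1/\alpha=(2LT^2W_1/c_*)^{1/3}$ gives $6(2/c_*)^{1/3}\Delta(LW_1/T)^{1/3}$.
 \item $1/\alpha=(LTW_0/\Delta)^{1/2}$ gives $6\sqrt{L\Delta W_0/T}$.
\end{itemize}
When $W_1=0$ the two growth terms are simply absent, which is consistent with the right side of~\eqref{eq:interpolation}. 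Since $c_*=(96\Dprob)^{-1}$ and $\Dprob$ depend only on $p$, collecting the factors yields~\eqref{eq:interpolation}. One can take, for example,
\[
 C_p=6\max\{1,\sqrt{2/c_*},(2/c_*)^{1/3}\}+18\Dprob+6 .
\]

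There is no real obstacle here. The only points needing care are the following.
\begin{itemize}
 \item The hypotheses $\Delta>0$ and $W_0>0$ are what make the extra cap a finite positive number, so that $\alpha>0$.
 \item Taking the extra minimum cannot violate~\eqref{eq:main-condition}, because $h_T$ is increasing in $\alpha$.
 \item The reciprocal of a minimum of positive numbers is at most the sum of the reciprocals.
\end{itemize}
The step most worth double-checking is the bookkeeping of constants, to confirm that $C_p$ does not depend on $L$, $\Delta$, $\delta$, or $T$. The dependence on $\delta$ is carried entirely by $W_0$ and $W_1$ through $\ell$ and $\rho$, so this holds.
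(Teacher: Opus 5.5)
Your proposal is correct and follows essentially the same route as the paper's proof. You apply Theorem~\ref{thm:main} at the admissible step and use the extra cap to get $L\alpha W_0\le\sqrt{L\Delta W_0/T}$. You then write $\Delta/(\alpha T)$ as a maximum of reciprocals, bound it by their sum, and absorb the $p$-dependent factors $c_*$ and $\Dprob$ into $C_p$; your explicit constant is a valid choice.
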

\begin{proof}
The choice satisfies the main theorem and $L\alpha W_0\le\sqrt{L\Delta W_0/T}$. Since the reciprocal of a minimum of positive numbers is the maximum of their reciprocals,
\begin{align*}
 \frac{\Delta}{\alpha T}
 =\max\left\{
    \frac{L\Delta}{T},
    \Delta\sqrt{\frac{2W_1}{c_*T}},
    \Delta\left(\frac{2LW_1}{c_*T}\right)^{1/3},
    \sqrt{\frac{L\Delta W_0}{T}}
          \right\}.
\end{align*}
Bound the maximum by the sum and apply~\eqref{eq:main-bound}. Since $c_*$ and $\Dprob$ depend only on $p$, their numerical factors can be included in $C_p$.
\end{proof}
When $\Delta=0$ or $W_0=0$, use Theorem~\ref{thm:main} and~\eqref{eq:explicit-step} directly, rather than dividing by a vanishing quantity. In particular, $\Delta=0$ does not mean that the trajectory is constant when the noise intercept is nonzero.

\subsection{Fixed confidence}
\begin{corollary}[Finite-$p$ fixed-confidence rate]\label{cor:fixed-confidence}
Fix $\delta\in(0,1)$ and let $\ell=\log(6/\delta)$. Define
\[
 \overline W_1=b_2\ell+b_p\delta^{-2/p}.
\]
If ${a_0}>0$ satisfies ${a_0}\le L^{-1}$ and
\begin{equation}\label{eq:fixed-a}
 ({a_0}^2+L{a_0}^3)\overline W_1\le\frac1{96\Dprob},
\end{equation}
then $\alpha={a_0}T^{-2/3}$ is admissible for every positive integer $T$, and with probability at least $1-\delta$,
\begin{equation}\label{eq:fixed-p-rate}
 A_T\le\frac{6\Delta}{{a_0}}T^{-1/3}
       +18\Dprob L{a_0}\sigma^2\ell\,T^{-2/3}
       +18\Dprob L{a_0}\nu_p^2\delta^{-2/p}T^{2/p-5/3}.
\end{equation}
Consequently $A_T=O(T^{-1/3})$ at the stated fixed confidence.
\end{corollary}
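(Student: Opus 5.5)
The proof reduces to Theorem~\ref{thm:main}. We verify its admissibility condition~\eqref{eq:main-condition} for $\alpha={a_0}T^{-2/3}$ at every $T$ and then substitute this stepsize into~\eqref{eq:main-bound}.

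\emph{Smoothness cap and the horizon factors.} Since $T\ge1$, we have $\alpha={a_0}T^{-2/3}\le{a_0}\le L^{-1}$. Next we compute
\[
 h_T=\alpha^2T(1+L\alpha T)={a_0}^2T^{-1/3}+L{a_0}^3\le {a_0}^2+L{a_0}^3 .
\]
The quantity $W_1=b_2\ell+b_p\rho$ from~\eqref{eq:W} depends on $T$ through $\rho=T^{2/p-1}\delta^{-2/p}$. Because $p>2$, the exponent $2/p-1$ is negative, so $\rho\le\delta^{-2/p}$ for all $T\ge1$. This gives $W_1\le\overline W_1$. Combining the two bounds with~\eqref{eq:fixed-a} yields
\[
 h_TW_1\le({a_0}^2+L{a_0}^3)\overline W_1\le(96\Dprob)^{-1},
\]
which is exactly~\eqref{eq:main-condition}. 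This monotonicity in $T$ is the only point that needs care: the fixed constant $\overline W_1$ must dominate $W_1$ uniformly in the horizon, and it does precisely because $p>2$.

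\emph{Substitution.} Theorem~\ref{thm:main} now applies, so with probability at least $1-\delta$ the bound~\eqref{eq:main-bound} holds. Inserting $\alpha={a_0}T^{-2/3}$ term by term:
\begin{itemize}
 \item $6\Delta/(\alpha T)=6\Delta{a_0}^{-1}T^{-1/3}$;
 \item $18\Dprob L\alpha\sigma^2\ell=18\Dprob L{a_0}\sigma^2\ell\,T^{-2/3}$;
 \item $18\Dprob L\alpha\nu_p^2T^{2/p-1}\delta^{-2/p}=18\Dprob L{a_0}\nu_p^2\delta^{-2/p}T^{2/p-5/3}$.
\end{itemize}
This is~\eqref{eq:fixed-p-rate}.

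\emph{Rate.} Since $p>2$, the exponent satisfies $2/p-5/3<1-5/3=-2/3\le-1/3$. The first term therefore dominates as $T$ grows, which gives $A_T=O(T^{-1/3})$ at the fixed confidence level $\delta$.
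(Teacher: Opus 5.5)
Your proposal is correct and follows the paper's proof essentially step for step. You bound $\rho=T^{2/p-1}\delta^{-2/p}\le\delta^{-2/p}$ (using $p>2$ and $T\ge1$) to get $W_1\le\overline W_1$, compute $h_T=a_0^2T^{-1/3}+La_0^3$ to verify~\eqref{eq:main-condition}, then substitute $\alpha=a_0T^{-2/3}$ into~\eqref{eq:main-bound} and use $2/p-5/3<-2/3$ to conclude the $O(T^{-1/3})$ rate.
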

\begin{proof}
Since $p>2$ and $T\ge1$, $T^{2/p-1}\le1$, so $W_1\le\overline W_1$. Also $h_T={a_0}^2T^{-1/3}+L{a_0}^3$. Thus~\eqref{eq:fixed-a} implies~\eqref{eq:main-condition}. Substitute the stepsize into~\eqref{eq:main-bound}. Finally $2/p-5/3<-2/3$, so the last two powers decay faster than $T^{-1/3}$.
\end{proof}
The finite-$p$ assumption is not needed for the exponent $1/3$ at fixed confidence: Corollary~\ref{cor:expect-rate} already supplies it under second moments. The reason to impose $p>2$ is the separation of confidence costs in~\eqref{eq:fixed-p-rate} and the following increasing-confidence guarantees.

\subsection{Polynomially increasing confidence}
\begin{corollary}[Increasing confidence]\label{cor:increasing}
Let $r>0$, set $\delta_T=T^{-r}$ for $T\ge2$, and define
\begin{equation}\label{eq:confidence-w}
 z=\frac{2(1+r)}p-1,\qquad
 w_T=\log(6T^r)+T^z.
\end{equation}
If $0<r<p-1$, there is a sufficiently small constant ${a_0}>0$ such that
\begin{equation}\label{eq:increasing-step}
 \alpha_T={a_0}T^{-2/3}w_T^{-1/3}
\end{equation}
satisfies~\eqref{eq:main-condition} for all sufficiently large $T$. For these runs,
\begin{equation}\label{eq:increasing-rate}
 \Prob\left\{A_T>C\left(\frac{w_T}{T}\right)^{1/3}\right\}\le T^{-r}
\end{equation}
for a finite constant $C$ depending only on fixed problem parameters, $p,r$, and ${a_0}$. In particular,
\begin{equation}\label{eq:log-confidence}
 0<r\le p/2-1
 \quad\Longrightarrow\quad
 A_T=O\bigl(T^{-1/3}(\log T)^{1/3}\bigr)
 \quad\hbox{with failure probability }T^{-r}.
\end{equation}
If $p/2-1<r<p-1$, a sufficient rate is
\begin{equation}\label{eq:high-confidence-power}
 A_T=O\left(T^{-[\,2-2(1+r)/p\,]/3}\right)
 \quad\hbox{with failure probability }T^{-r}.
\end{equation}
\end{corollary}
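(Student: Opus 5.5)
This corollary follows from Theorem~\ref{thm:main}, applied at each horizon $T$ with $\delta=\delta_T=T^{-r}$. The first step computes the two confidence-dependent quantities in~\eqref{eq:ell-rho}. We have $\ell=\log(6T^r)$ and
\[
 \rho=T^{2/p-1}T^{2r/p}=T^{z}.
\]
Hence $W_0\le\max\{\sigma^2,\nu_p^2\}\,w_T$ and $W_1\le\max\{b_2,b_p\}\,w_T$. The whole argument therefore reduces to bookkeeping in the single scale $w_T$. The restriction $r<p-1$ is exactly the condition $z<1$. Consequently $w_T/T\to0$, and the stepsize~\eqref{eq:increasing-step} tends to zero.

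Next I verify~\eqref{eq:main-condition}. With $\alpha_T=a_0T^{-2/3}w_T^{-1/3}$,
\[
 h_T=a_0^2T^{-1/3}w_T^{-2/3}+La_0^3w_T^{-1},
\]
so
\[
 h_TW_1\le \max\{b_2,b_p\}\bigl(a_0^2(w_T/T)^{1/3}+La_0^3\bigr).
\]
Because $w_T\ge\log 6>0$ and $w_T/T\to0$, the first term is eventually at most $a_0^2$. Choosing $a_0$ small enough that $\max\{b_2,b_p\}(a_0^2+La_0^3)\le(96\Dprob)^{-1}$ then gives the condition for all large $T$. The smoothness cap $\alpha_T\le L^{-1}$ also holds eventually, since $\alpha_T\to0$. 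This is the only point requiring care. The growth restriction is \emph{not} uniform in $T$ unless $z<1$, because otherwise the term $a_0^2(w_T/T)^{1/3}$ does not vanish and $a_0$ would have to shrink with $T$. So I expect the main (mild) obstacle to be confirming that the hypothesis $r<p-1$ is used precisely here and in the decay of the rate.

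Then I substitute into~\eqref{eq:main-bound}. The first term is
\[
 \frac{6\Delta}{\alpha_TT}=\frac{6\Delta}{a_0}\Bigl(\frac{w_T}{T}\Bigr)^{1/3}.
\]
The second term is at most
\[
 18\Dprob L\alpha_T\max\{\sigma^2,\nu_p^2\}w_T=18\Dprob La_0\max\{\sigma^2,\nu_p^2\}\,T^{-2/3}w_T^{2/3}.
\]
This equals a constant times $(w_T/T)^{1/3}(w_T/T)^{1/3}$, which is at most a constant times $(w_T/T)^{1/3}$ once $w_T\le T$. Taking $C$ as the sum of these constants gives~\eqref{eq:increasing-rate} for all large $T$. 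Finitely many small $T$ can either be absorbed by enlarging $C$ or excluded by the ``sufficiently large'' qualifier.

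Finally I read off the two regimes. If $r\le p/2-1$, then $z\le0$, so $T^z\le1$ and $w_T=O(\log T)$; this gives~\eqref{eq:log-confidence}. If $p/2-1<r<p-1$, then $0<z<1$ and $w_T=O(T^z)$, so
\[
 \Bigl(\frac{w_T}{T}\Bigr)^{1/3}=O\bigl(T^{-(1-z)/3}\bigr),
\qquad 1-z=2-\frac{2(1+r)}{p},
\]
which is~\eqref{eq:high-confidence-power}.
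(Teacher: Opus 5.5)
Your proposal is correct and follows the paper's proof essentially step for step. You compute $\ell$ and $\rho=T^z$, bound $W_0,W_1$ by constants times $w_T$, check $h_TW_1\le\max\{b_2,b_p\}\{a_0^2(w_T/T)^{1/3}+La_0^3\}$, and substitute into~\eqref{eq:main-bound}. The one cosmetic difference is the choice of $a_0$: you bound $a_0^2+La_0^3$ using $w_T\le T$ eventually, while the paper constrains only the cubic term and lets $w_T/T\to0$ absorb the quadratic one.

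One small aside: your remark that admissibility is non-uniform unless $z<1$ is overstated at $z=1$, where $w_T/T\to1$ stays bounded. The indispensable use of $z<1$ is the decay of $(w_T/T)^{1/3}$.
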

\begin{proof}
Under $\delta_T=T^{-r}$, the quantity $\rho$ in~\eqref{eq:ell-rho} is $T^z$. Write $B=b_2+b_p$ and $S=\sigma^2+\nu_p^2$. Then
\[
 W_1\le B w_T,\qquad W_0\le S w_T.
\]
For the choice~\eqref{eq:increasing-step},
\begin{equation}\label{eq:increasing-admissibility}
 h_TW_1
 \le B\left\{{a_0}^2\left(\frac{w_T}{T}\right)^{1/3}+L{a_0}^3\right\}.
\end{equation}
The condition $r<p-1$ is equivalent to $z<1$, and hence $w_T/T\to0$. Choose ${a_0}$ so that $BL{a_0}^3<(192\Dprob)^{-1}$ when $B>0$; then for all sufficiently large $T$,~\eqref{eq:increasing-admissibility} is at most $(96\Dprob)^{-1}$. If $B=0$, the feedback condition is empty. Also $\alpha_T\to0$, so the smoothness cap holds eventually.

The two terms in~\eqref{eq:main-bound} satisfy
\[
 \frac{\Delta}{\alpha_TT}
   =\frac\Delta {a_0}\left(\frac{w_T}{T}\right)^{1/3},\qquad
 L\alpha_TW_0\le L{a_0}S\left(\frac{w_T}{T}\right)^{2/3}.
\]
Since $w_T/T\to0$, the second is bounded by a constant times the first power of $w_T/T$ in~\eqref{eq:increasing-rate}, even if $\Delta=0$. This proves~\eqref{eq:increasing-rate}. If $z\le0$, $w_T\asymp\log T$; if $0<z<1$, $w_T\asymp T^z$. Substitution gives~\eqref{eq:log-confidence} and~\eqref{eq:high-confidence-power}.
\end{proof}
The range $r<p-1$ is the sufficient increasing-confidence range obtained from this stepsize construction.

\begin{example}[A fourth-moment certificate]\label{ex:p4}
For $p=4$ and $\delta_T=T^{-1}$, $\rho=1$ and $w_T=\log(6T)+1$. A step of order $T^{-2/3}(\log T)^{-1/3}$ yields
\[
 A_T=O\bigl(T^{-1/3}(\log T)^{1/3}\bigr)
 \quad\hbox{with probability at least }1-T^{-1}.
\]
At the same stepsize, Theorem~\ref{thm:expectation} gives an expectation certificate of order $T^{-1/3}(\log T)^{1/3}$. A direct first-moment Markov conversion at failure probability $T^{-1}$ instead gives order $T^{2/3}(\log T)^{1/3}$, which does not establish convergence. The comparison concerns the sharpness of two confidence certificates under the additional fourth moment.
\end{example}

\subsection{Stationarity criteria and oracle complexity}\label{sec:output}
Let $\tau$ be uniform on $\{1,\ldots,T\}$ and independent of the complete run. Then
\begin{equation}\label{eq:random-output}
 \E\bigl(\norm{\nabla F(x_\tau)}^2\mid x_1,\ldots,x_T\bigr)=A_T,
 \qquad \E\norm{\nabla F(x_\tau)}^2=\E A_T.
\end{equation}
Corollary~\ref{cor:complexity} gives a parameter-explicit sufficient budget for this squared-gradient output criterion. Each update uses exactly one oracle call; the expectation guarantee already implies the first-moment stationarity criterion used in the external BG-0 lower bound. Full-gradient evaluations used to report $A_T$ are not part of the optimizer and are unnecessary for selecting $x_\tau$.

For a probability bound, if $\Prob(A_T>B)\le\delta$ and $u\in(0,1)$, conditional Markov and~\eqref{eq:random-output} give
\begin{equation}\label{eq:output-tail}
 \Prob\{\norm{\nabla F(x_\tau)}^2>B/u\}\le\delta+u.
\end{equation}
Indeed, on $\{A_T\le B\}$ the conditional probability is at most $u$, and the complementary event has probability at most $\delta$. Also $\min_{t\le T}\norm{\nabla F(x_t)}^2\le A_T$ pathwise. Neither observation is a last-iterate theorem or an assertion that the iterate minimizing the full gradient norm is available without additional computation.

\section{Objective-gap moment growth and a \texorpdfstring{root-$T$}{root-T} refinement}\label{sec:gap}
Distance growth alone allows noise to increase along flat objective directions. When the noise scale is controlled directly by the objective gap, a sharper localization condition is available. This is an additional assumption, not a consequence of the two-point Lipschitz condition by itself.

\begin{assumption}[Objective-gap moment growth]\label{ass:gap}
For $p>2$ there are deterministic $\omega_2,\omega_p,\gamma_2,\gamma_p\ge0$ such that, almost surely,
\begin{align}
 \E_{t-1}\norm{\xi_t}^2
 &\le \omega_2+\gamma_2\{F(x_t)-F_*\},\notag\\
 \bigl(\E_{t-1}\norm{\xi_t}^p\bigr)^{2/p}
 &\le \omega_p+\gamma_p\{F(x_t)-F_*\}.
 \label{eq:gap-growth}
\end{align}
\end{assumption}
The intercepts $\omega_2,\omega_p$ are squared moment scales; $a_0$ denotes a horizon-dependent stepsize coefficient. For this assumption the finite-horizon integrability argument remains valid: the smooth upper estimate
\[
 F(x)-F_*\le\Delta+\norm{\nabla F(x_1)}\norm{x-x_1}+\frac L2\norm{x-x_1}^2
\]
converts the squared root-moment bound to a constant times $1+\norm{x-x_1}^2$, so Lemma~\ref{lem:integrability}'s induction applies.

\Needspace{21\baselineskip}
\begin{theorem}[Nagaev bound under objective-gap growth]\label{thm:gap}
Suppose Assumptions~\ref{ass:objective} and~\ref{ass:gap} hold. Define
\begin{equation}\label{eq:Q}
 Q_0=\omega_2\ell+\omega_p\rho,\qquad Q_1=\gamma_2\ell+\gamma_p\rho,
\end{equation}
with $\ell,\rho$ as in~\eqref{eq:ell-rho}. If
\begin{equation}\label{eq:gap-condition}
 0<\alpha\le L^{-1},\qquad L\alpha^2TQ_1\le\frac1{24\Dprob},
\end{equation}
then, with probability at least $1-\delta$,
\begin{align}
 A_T&\le\frac{8\Delta}{\alpha T}+18\Dprob L\alpha Q_0,
 \label{eq:gap-bound}\\
 \max_{1\le t\le T+1}\{F(x_t)-F_*\}
 &\le H:=8\Delta+12\Dprob L\alpha^2TQ_0.
 \label{eq:gap-H}
\end{align}
\end{theorem}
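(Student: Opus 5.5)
The proof mirrors Theorem~\ref{thm:main}, with the localization performed at the objective level instead of the distance level. We first handle the degenerate case: if $H=0$, then $\Delta=0$ and $Q_0=0$, and the intercepts vanish. Lemma~\ref{lem:self-bound} then gives $g_1=0$ and $\xi_1=0$ almost surely, so the recursion stays at $x_1$ inductively.

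For $H>0$, we define the predictable objective stop
\[
 I_t=\1_{\{\max_{1\le s\le t}(F(x_s)-F_*)\le H\}},\qquad \widetilde\xi_t=I_t\xi_t.
\]
Assumption~\ref{ass:gap} then supplies deterministic envelopes $s^2=\omega_2+\gamma_2H$ and $n_p^2=\omega_p+\gamma_pH$ for $\widetilde\xi_t$. These increments remain martingale differences, as in Lemma~\ref{lem:maximal}. The auxiliary recursion $\widetilde x_{t+1}=\widetilde x_t-\alpha\{\nabla F(\widetilde x_t)+\widetilde\xi_t\}$ with $\widetilde x_1=x_1$ is driven by these stopped errors. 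Proposition~\ref{prop:uniform} applies to it and gives an event of probability at least $1-\delta$. On that event, $\max_t\{F(\widetilde x_t)-F_*\}\le 4\Delta+6L\alpha^2J_H^2$ and $\widetilde A_T\le 4\Delta/(\alpha T)+12L\alpha J_H^2/T$. By~\eqref{eq:J-square}, $J_H^2\le\Dprob T(Q_0+Q_1H)$.

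The key closure step uses~\eqref{eq:gap-condition}:
\[
 4\Delta+6\Dprob L\alpha^2T(Q_0+Q_1H)\le 4\Delta+6\Dprob L\alpha^2TQ_0+\tfrac H4.
\]
By the definition $H=8\Delta+12\Dprob L\alpha^2TQ_0$, the first two terms equal $H/2$, so the auxiliary objective stays at most $3H/4<H$. This is where the condition $L\alpha^2TQ_1$ replaces $h_TW_1$. No displacement bound is needed, because the envelope depends on the objective gap, which Proposition~\ref{prop:uniform} already controls.

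Next we argue pathwise, exactly as in Step~3 of Theorem~\ref{thm:main}. If the original process first exceeded $H$ at time $\tau\le T+1$, then $I_t=1$ for all $t<\tau$, so $\widetilde x_t=x_t$ for all $t\le\tau$, including the exit-producing increment. This contradicts the bound $3H/4$. Hence the two recursions coincide, and~\eqref{eq:gap-H} holds.

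Finally, we substitute $J_H^2\le\Dprob T(Q_0+Q_1H)$ into the stationarity bound:
\[
 A_T\le\frac{4\Delta}{\alpha T}+12\Dprob L\alpha Q_0+12\Dprob L\alpha Q_1H.
\]
We expand $H$ and absorb each piece using $L\alpha^2TQ_1\le(24\Dprob)^{-1}$. The $\Delta$-part gives $96\Dprob L\alpha Q_1\Delta\le 4\Delta/(\alpha T)$. The $Q_0$-part gives $144\Dprob^2L^2\alpha^3TQ_1Q_0\le 6\Dprob L\alpha Q_0$. Together these yield~\eqref{eq:gap-bound}.

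The main obstacle is bookkeeping rather than conceptual. We must ensure that the auxiliary envelope is deterministic before the concentration inequality is invoked, which holds because $H$ is deterministic. We must also check that the constants close with the stated $8\Delta$ and $12\Dprob$, which the computation above confirms.
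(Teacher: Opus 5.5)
Your proposal is correct and follows the paper's proof essentially step for step: the same objective-level predictable stop, the auxiliary recursion handled by Proposition~\ref{prop:uniform} with $J_H^2\le\Dprob T(Q_0+Q_1H)$, the closure to $3H/4$ via $L\alpha^2TQ_1\le(24\Dprob)^{-1}$, the pathwise first-exit identification, and the same two absorptions yielding $8\Delta/(\alpha T)+18\Dprob L\alpha Q_0$. One small attribution point in the degenerate case: $\xi_1=0$ almost surely comes from Assumption~\ref{ass:gap} with $\omega_2=0$ and $F(x_1)-F_*=0$, not from Lemma~\ref{lem:self-bound}, which only gives $g_1=0$.
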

\begin{proof}
Assume $H>0$ and define predictable indicators on the original trajectory by
\[
 I_t=\1_{\{\max_{1\le s\le t}(F(x_s)-F_*)\le H\}},\qquad
 \widetilde\xi_t=I_t\xi_t.
\]
Their deterministic envelopes are $s_H^2=\omega_2+\gamma_2H$ and $n_H^2=\omega_p+\gamma_pH$. Drive the auxiliary recursion~\eqref{eq:auxiliary} with these errors. Proposition~\ref{prop:uniform} provides a good event of probability at least $1-\delta$ and a deterministic $J_H$ obeying
\[
 J_H^2\le\Dprob T(Q_0+Q_1H).
\]
On this event,
\begin{align*}
 \max_t\{F(\widetilde x_t)-F_*\}
 &\le4\Delta+6\Dprob L\alpha^2T(Q_0+Q_1H)\\
 &=\frac H2+6\Dprob L\alpha^2TQ_1H\le\frac{3H}{4}.
\end{align*}
If the original trajectory first exited the objective level $H$ at time $\tau$, the errors and recursions would agree through the update producing $x_\tau$. Then $F(x_\tau)=F(\widetilde x_\tau)\le F_*+3H/4$, a contradiction. The trajectories therefore agree on the good event and~\eqref{eq:gap-H} follows.

Their stationarity bound satisfies
\begin{align*}
 A_T
 &\le\frac{4\Delta}{\alpha T}+12\Dprob L\alpha(Q_0+Q_1H)\\
 &=\frac{4\Delta}{\alpha T}+12\Dprob L\alpha Q_0
   +96\Dprob L\alpha Q_1\Delta
   +144\Dprob^2 L^2\alpha^3TQ_1Q_0.
\end{align*}
By~\eqref{eq:gap-condition}, the last two terms are at most $4\Delta/(\alpha T)$ and $6\Dprob L\alpha Q_0$, respectively. This proves~\eqref{eq:gap-bound}. If $H=0$, then $\Delta=Q_0=0$, so the gradient and the noise at initialization vanish. Induction gives a constant trajectory, proving the degenerate case.
\end{proof}

\begin{corollary}[Fixed-confidence root-$T$ rate]\label{cor:rootT}
Fix $\delta\in(0,1)$ and all problem parameters in Assumption~\ref{ass:gap}. A sufficiently small fixed ${a_0}>0$ and $\alpha={a_0}T^{-1/2}$ satisfy~\eqref{eq:gap-condition} for all sufficiently large $T$. For these runs,
\begin{equation}\label{eq:rootT}
 A_T=O(T^{-1/2})\quad\hbox{with probability at least }1-\delta.
\end{equation}
\end{corollary}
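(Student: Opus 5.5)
The plan is to apply Theorem~\ref{thm:gap} directly with $\delta$ fixed. First, note that $\ell=\log(6/\delta)$ is a constant, while $\rho=T^{2/p-1}\delta^{-2/p}\le\delta^{-2/p}$ because $p>2$ and $T\ge1$. Define the horizon-free majorants
\[
 \overline Q_0=\omega_2\ell+\omega_p\delta^{-2/p},\qquad
 \overline Q_1=\gamma_2\ell+\gamma_p\delta^{-2/p}.
\]
Then $Q_0\le\overline Q_0$ and $Q_1\le\overline Q_1$ for every $T$.

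Next, check the admissibility condition~\eqref{eq:gap-condition} for $\alpha=a_0T^{-1/2}$. The feedback term is $L\alpha^2TQ_1=La_0^2Q_1\le La_0^2\overline Q_1$, which does not depend on $T$. So if $\overline Q_1>0$, choose $a_0\le(24\Dprob L\overline Q_1)^{-1/2}$; if $\overline Q_1=0$, the feedback condition is vacuous. The smoothness cap $\alpha\le L^{-1}$ holds once $T\ge(La_0)^2$, which is where "sufficiently large $T$" enters. Alternatively, also imposing $a_0\le L^{-1}$ gives the cap for every $T\ge1$.

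Finally, substitute into~\eqref{eq:gap-bound}. With probability at least $1-\delta$,
\[
 A_T\le\frac{8\Delta}{a_0}T^{-1/2}+18\Dprob La_0\overline Q_0\,T^{-1/2}.
\]
This is $O(T^{-1/2})$, with a constant that depends on $p,\delta,L,\Delta$, the growth parameters, and $a_0$. Since $\rho\to0$, the $\omega_p$ contribution in fact decays faster, but that is not needed.

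There is no real obstacle here. The only point needing care is to confirm that the feedback quantity $L\alpha^2TQ_1$ stays bounded uniformly in $T$ under the $T^{-1/2}$ scaling. This holds because $\rho\le\delta^{-2/p}$, and it is exactly why objective-gap growth avoids the $L\alpha^3T^2$ term that forced the $T^{-2/3}$ step in Theorem~\ref{thm:main}.
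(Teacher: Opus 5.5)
Your proposal is correct and follows the paper's proof essentially step for step. You bound $\rho\le\delta^{-2/p}$ so that $L\alpha^2TQ_1=La_0^2Q_1$ is uniformly small, note that the smoothness cap eventually holds, and substitute into~\eqref{eq:gap-bound}. The only cosmetic difference is that you majorize all of $Q_0$ by a horizon-free constant, whereas the paper keeps the $\omega_p$ contribution separate as $O(T^{2/p-3/2})=o(T^{-1/2})$, a point you also note.
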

\begin{proof}
For $T\ge1$, $Q_1\le\gamma_2\ell+\gamma_p\delta^{-2/p}$. Hence
\[
 L\alpha^2TQ_1\le L{a_0}^2(\gamma_2\ell+\gamma_p\delta^{-2/p}),
\]
which can be made at most $(24\Dprob)^{-1}$. The smoothness cap eventually holds. Substituting in~\eqref{eq:gap-bound}, the transient and variance terms have order $T^{-1/2}$ and the polynomial term has order $T^{2/p-3/2}=o(T^{-1/2})$.
\end{proof}

\begin{proposition}[A geometric sufficient condition]\label{prop:gap-geometry}
Suppose the pointwise iid-oracle version of Assumption~\ref{ass:p} holds. If deterministic $r_0,\kappa\ge0$ satisfy
\begin{equation}\label{eq:gap-geometry}
 \norm{x-x_1}^2\le r_0^2+\kappa\{F(x)-F_*\},\qquad x\in\R^d,
\end{equation}
then Assumption~\ref{ass:gap} holds with
\[
 \omega_2=\sigma^2+b_2r_0^2,\quad \omega_p=\nu_p^2+b_pr_0^2,
 \qquad \gamma_2=b_2\kappa,\quad\gamma_p=b_p\kappa.
\]
Alternatively, if a minimizer $x_*$ satisfies
$F(x)-F_*\ge(\mu/2)\norm{x-x_*}^2$ for $\mu>0$, and
$\norm{e(x,Z)}_q\le s_{q,*}+K_q\norm{x-x_*}$ for $q=2,p$, then one may take
\begin{equation}\label{eq:quadratic-growth-moments}
 \omega_q=2s_{q,*}^2,\qquad\gamma_q=4K_q^2/\mu,\qquad q=2,p.
\end{equation}
\end{proposition}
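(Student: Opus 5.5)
The plan is to verify Assumption~\ref{ass:gap} pointwise in $x$ and then pass to the conditional statements by the substitution identity~\eqref{eq:conditional-substitution}. By \emph{pointwise iid-oracle version of Assumption~\ref{ass:p}} I mean the bounds
\[
 \E_Z\norm{e(x,Z)}^2\le\sigma^2+b_2\norm{x-x_1}^2,
 \qquad
 \bigl(\E_Z\norm{e(x,Z)}^p\bigr)^{2/p}\le\nu_p^2+b_p\norm{x-x_1}^2,
\]
holding for every deterministic $x$. Since $x_t$ is $\F_{t-1}$-measurable and $Z_t$ is independent of $\F_{t-1}$, evaluating these at $x=x_t$ gives the conditional moments $\E_{t-1}\norm{\xi_t}^2$ and $(\E_{t-1}\norm{\xi_t}^p)^{2/p}$. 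It therefore suffices to prove the pointwise inequalities
\[
 \E_Z\norm{e(x,Z)}^2\le\omega_2+\gamma_2\{F(x)-F_*\}
\]
and the analogous $L^p$ one, for every $x\in\R^d$.

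For the first claim, substitute the geometric condition~\eqref{eq:gap-geometry} into the distance-growth bounds. For $q=2$,
\[
 \sigma^2+b_2\norm{x-x_1}^2\le\sigma^2+b_2r_0^2+b_2\kappa\{F(x)-F_*\}.
\]
The identical substitution with $(\nu_p^2,b_p)$ handles the $L^p$ bound. The only point to check is that $b_2,b_p\ge0$, so the inequalities are preserved under multiplication. These give exactly the stated $\omega_q,\gamma_q$.

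For the second claim, start from $\norm{e(x,Z)}_q\le s_{q,*}+K_q\norm{x-x_*}$ and square with $(a+b)^2\le2a^2+2b^2$:
\[
 \norm{e(x,Z)}_q^2\le2s_{q,*}^2+2K_q^2\norm{x-x_*}^2.
\]
Quadratic growth gives $\norm{x-x_*}^2\le(2/\mu)\{F(x)-F_*\}$, hence $2K_q^2\norm{x-x_*}^2\le(4K_q^2/\mu)\{F(x)-F_*\}$, which is~\eqref{eq:quadratic-growth-moments}. For $q=2$ the left side is the second moment itself. For $q=p$ it is $(\E_Z\norm{e}^p)^{2/p}$, which is precisely the form appearing in~\eqref{eq:gap-growth}. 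Note that this alternative route does not need~\eqref{eq:gap-geometry} or the anchor $x_1$ at all.

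I expect no real obstacle here. The only care needed is the measurability and substitution step, which was already justified in Proposition~\ref{prop:lipschitz} via the monotone class theorem, together with the remark after Assumption~\ref{ass:gap} that guarantees finite-horizon integrability.
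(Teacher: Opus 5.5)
Your proposal is correct and matches the paper's proof essentially step for step. You substitute \eqref{eq:gap-geometry} into the pointwise squared root-moment bounds, and for the second claim you square the affine bound and apply quadratic growth. You then pass to the conditional moments through the fresh-sample identity \eqref{eq:conditional-substitution}.
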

\begin{proof}
Substitute~\eqref{eq:gap-geometry} into the two pointwise squared root-moment bounds, and use the fresh-sample identity~\eqref{eq:conditional-substitution}. For the second assertion,
\[
 \norm{e(x,Z)}_q^2
 \le2s_{q,*}^2+2K_q^2\norm{x-x_*}^2
 \le2s_{q,*}^2+\frac{4K_q^2}{\mu}\{F(x)-F_*\}.
\]
Conditioning gives~\eqref{eq:quadratic-growth-moments}.
\end{proof}
Strong convexity is sufficient for the second geometric inequality when $F_*$ is the minimum, but is not necessary. Neither ordinary nonconvex smoothness nor a bound on the objective alone implies~\eqref{eq:gap-geometry}. Also, a quadratic growth inequality relative to the \emph{distance to a set} of minimizers does not automatically bound distance to a particular point of that set.

\paragraph{Relation to expected smoothness.}
In the iid setting, Assumption~\ref{ass:gap} at $q=2$ gives
\[
 \E_Z\norm{G(x,Z)}^2
 \le\norm{\nabla F(x)}^2+\omega_2+\gamma_2\{F(x)-F_*\}.
\]
This is a special case of the ABC condition of \citet{khaled2023}. Theorem~\ref{thm:gap} adds a specified $p$th-moment growth assumption and an explicit variance/rare-shock confidence bound. Its contribution is the finite-moment confidence decomposition under this established second-moment framework.

\section{Examples and the necessity of noise-sensitive stability}\label{sec:examples}
All random observations in this section are independent across iterations. The examples verify the assumptions pointwise in the parameter before conditional substitution along the trajectory.

\subsection{Random affine gradient errors}
\begin{example}[Matrix-valued multiplicative noise]\label{ex:affine}
Let $B=B(Z)$ be a random $d\times d$ matrix and $\eta=\eta(Z)$ a random vector, with $\E B=0$, $\E\eta=0$, $\norm{\norm B_{\op}}_p<\infty$, and $\norm\eta_p<\infty$. No independence between $B$ and $\eta$ is required. Define
\begin{equation}\label{eq:affine-oracle}
 G(x,Z)=\nabla F(x)+B(Z)(x-x_1)+\eta(Z).
\end{equation}
For $q=2,p$, Minkowski gives
\[
 \norm{e(x,Z)}_q
 \le\norm\eta_q+\norm{\norm B_{\op}}_q\norm{x-x_1}.
\]
Thus the squared root-moment growth constants may be chosen as
\[
 \sigma^2=2\norm\eta_2^2,\quad b_2=2\norm{\norm B_{\op}}_2^2,
 \qquad
 \nu_p^2=2\norm\eta_p^2,\quad b_p=2\norm{\norm B_{\op}}_p^2.
\]
The full stochastic gradient is $L^p$-Lipschitz with constant at most $L+\norm{\norm B_{\op}}_p$. If $B$ is symmetric, the oracle is the gradient of
\[
 f(x,Z)=F(x)+\frac12(x-x_1)^\top B(Z)(x-x_1)+\eta(Z)^\top(x-x_1).
\]
The objective is $\E f(x,Z)=F(x)$. Individual random losses are not required to be bounded below. If there is a direction $v$ with $\norm{Bv}_q>0$, the reverse triangle inequality along $x=x_1+rv$ gives
$\norm{e(x,Z)}_q\ge |r|\norm{Bv}_q-\norm\eta_q$, so a global moment envelope is impossible.
\end{example}

\subsection{Nonconvex flat directions: distance growth without gap growth}
\begin{example}[An objective level cannot control the noise]\label{ex:flat}
Let $Z$ be centered, nondegenerate, and in $L^p$. On $\R^2$, take
\begin{equation}\label{eq:flat-example}
 F(u,v)=1-\cos u,\qquad
 f((u,v),Z)=1-\cos u+Zuv,
 \qquad x_1=(\pi/2,0).
\end{equation}
Then $F_*=0$, $L=1$, and $F$ is nonconvex because its Hessian is $\operatorname{diag}(\cos u,0)$. The oracle and its error are
\[
 G((u,v),Z)=(\sin u+Zv,\,Zu),\qquad e((u,v),Z)=Z(v,u).
\]
Consequently
\begin{equation}\label{eq:flat-moments}
 \norm{e((u,v),Z)}_q=\norm Z_q\sqrt{u^2+v^2},\qquad q=2,p,
\end{equation}
and $G$ is $L^p$-Lipschitz with constant at most $1+\norm Z_p$. Relative to $x_1$, squared root-moment growth holds with intercept $2\norm Z_q^2(\pi/2)^2$ and coefficient $2\norm Z_q^2$. Theorem~\ref{thm:main} therefore applies for its stated stepsizes.

At $(u,v)=(0,v)$, however, $F(0,v)-F_*=0$ while $\norm{e((0,v),Z)}_q=|v|\norm Z_q$. No finite constants $\omega_q,\gamma_q$ can satisfy the global pointwise gap-growth inequality. The example has a bounded objective, no coercivity, and unbounded noise moments along a minimum-level set. It separates the two localization theorems without relying on an abstract counterexample to compactness.
\end{example}

\subsection{A nonconvex coercive example with gap growth}
\begin{example}[Quadratic growth without convexity]\label{ex:coercive}
Let
\begin{equation}\label{eq:coercive}
 F(x)=\frac{x^2}{2}+2(1-\cos x),\qquad x\in\R.
\end{equation}
Here $F_*=0$, $F''(x)=1+2\cos x$, and $L=3$ is a valid smoothness constant. Since $F''(\pi)=-1$, the objective is nonconvex. Nevertheless, $F(x)\ge x^2/2$ and
\[
 |x-x_1|^2\le2x^2+2x_1^2\le4F(x)+2x_1^2.
\]
Thus~\eqref{eq:gap-geometry} holds with $r_0^2=2x_1^2$ and $\kappa=4$. Any affine-noise oracle in Example~\ref{ex:affine} satisfies the gap-growth theorem, and sufficiently small steps of order $T^{-1/2}$ yield the fixed-confidence root-$T$ bound. Convexity is not used in this verification.
\end{example}

\subsection{Random-design regression with a nonconvex penalty}
\begin{example}[Unbounded random design]\label{ex:regression}
Let $(a,y)\in\R^d\times\R$ satisfy
\begin{equation}\label{eq:regression-moments}
 \E\norm a^{2p}<\infty,\qquad
 \E(|y|\norm a)^p<\infty,\qquad \E y^2<\infty,
 \qquad p>2.
\end{equation}
For $\lambda\ge0$, consider
\begin{align}
 F(x)&=\frac12\E(a^\top x-y)^2
             +\lambda\sum_{j=1}^d(1-\cos x_j),\notag\\
 G(x,(a,y))&=a(a^\top x-y)+\lambda\sin x,
 \label{eq:regression-oracle}
\end{align}
where the sine is coordinatewise. Put $Q=\E aa^\top$ and $c=\E ay$. The moment assumptions imply these quantities are finite and justify differentiating the quadratic expectation directly by expansion. Thus $\nabla F(x)=Qx-c+\lambda\sin x$, $F\ge0$, and $L=\norm Q_{\op}+\lambda$ is a valid smoothness bound whenever positive.

Write
\[
 B=aa^\top-Q,\qquad
 \eta=Bx_1-(ay-c).
\]
Then $\E B=0$, $\E\eta=0$, and the error equals $B(x-x_1)+\eta$. The moment conditions~\eqref{eq:regression-moments} imply the $L^p$ operator-norm and vector-moment conditions in Example~\ref{ex:affine}. Indeed, $\norm{aa^\top}_{\op}=\norm a^2$, and Minkowski bounds both $B$ and $\eta$. No almost-sure boundedness of the design or response is required.

This model can be genuinely nonconvex: in dimension one with $Q=1$ and $\lambda=2$, the Hessian is $1+2\cos x$, which is negative at $x=\pi$. The error is nevertheless unbiased and stochastic Lipschitz continuity holds. Applying the distance-growth theorem does not require a smallest eigenvalue bound on $Q$ or coercivity of this objective.
\end{example}

\subsection{A smoothness-only stepsize cap is insufficient}
\begin{proposition}[Multiplicative instability]\label{prop:instability}
There is a one-dimensional strongly convex objective and an unbiased iid stochastic-gradient oracle that is $L^p$-Lipschitz for every finite $p$, but whose SGD iterates diverge on every sample path at a stepsize satisfying $\alpha\le(8L)^{-1}$.
\end{proposition}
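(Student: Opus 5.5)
I would build an explicit construction. Take $F(x)=\frac L2x^2$ on $\R$ with $F_*=0$ and $x_1\neq0$, and use a multiplicative oracle of the form in Example~\ref{ex:affine}:
\[
 G(x,Z)=Lx+Bx=(L+B)x,
\]
where $B=B(Z)$ is centered, takes finitely many values, and is therefore in $L^p$ for every finite $p$. Then $\E_ZG(x,Z)=Lx=F'(x)$, so the oracle is unbiased. Moreover $|G(x,Z)-G(y,Z)|=|L+B||x-y|$, so $G$ is $L^p$-Lipschitz with constant $\norm{L+B}_p<\infty$ for every $p$. The update is then
\[
 x_{t+1}=(1-\alpha L-\alpha B_t)x_t,
\]
and the iterates equal $x_1$ times a product of iid multipliers $m_t=1-\alpha L-\alpha B_t$.

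To obtain divergence on \emph{every} sample path, not merely almost surely, I would force $|m_t|>1$ deterministically. Fix $\alpha=(8L)^{-1}$, so that $1-\alpha L=7/8$. Let $B$ be the symmetric two-point variable $\pm\beta$, each with probability $1/2$. The two multipliers are $7/8\mp\alpha\beta$. Choosing $\alpha\beta=4$, that is $\beta=32L$, gives multipliers $-25/8$ and $39/8$. Both have modulus at least $25/8>1$. Hence
\[
 |x_{T+1}|\ge(25/8)^T|x_1|\to\infty
\]
on every path. The objective values and the gradient norms $|Lx_t|$ also diverge, so $A_T\to\infty$ pathwise.

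The only delicate point is the choice of parameters. The multiplicative noise level $\beta$ must be large compared with $1/\alpha$ while $\alpha L$ is held fixed, and the sample space must be taken as $\{\pm\beta\}$ so that "every sample path" holds literally and there is no null exceptional set. Everything else is routine: strong convexity and smoothness of $F$ are immediate, and the growth constants follow from Example~\ref{ex:affine}.

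The construction shows that the admissibility conditions \eqref{eq:expect-condition} and \eqref{eq:main-condition} genuinely require a noise-sensitive restriction such as $b_2h_T\lesssim1$. Here $b_2=2\beta^2$ is so large that these conditions fail for this $\alpha$ at every $T\ge1$.
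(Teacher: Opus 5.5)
Your construction is correct and is essentially the paper's: a quadratic objective with multiplicative symmetric two-point noise $G(x,Z)=(L+B)x$ and stepsize $\alpha=(8L)^{-1}$, so that both possible multipliers have modulus greater than one and $|x_t|$ grows geometrically on every path. The only difference is in constants: the paper normalizes $L=1$ and uses $G(x,Z)=(1+24Z)x$, giving multipliers $-17/8$ and $31/8$, whereas your choice $\beta=32L$ gives $-25/8$ and $39/8$.
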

\begin{proof}
Let $F(x)=x^2/2$, $x_1=1$, and let $Z_t$ be iid symmetric signs. Define
\[
 G(x,Z)=(1+24Z)x,\qquad \alpha=1/8.
\]
Then $L=1$, $\E G(x,Z)=x$, $G(0,Z)=0$, and
\[
 \norm{G(x,Z)-G(y,Z)}_p=\norm{1+24Z}_p|x-y|<\infty
\]
for every finite $p$. The recursion is
\[
 x_{t+1}=\left(\frac78-3Z_t\right)x_t.
\]
The two multipliers are $-17/8$ and $31/8$. Therefore
\[
 |x_t|\ge(17/8)^{t-1},\qquad
 A_T=\frac1T\sum_{t=1}^Tx_t^2\ge\frac1T(17/8)^{2(T-1)}
\]
on every sample path. This establishes the assertion.
\end{proof}
More generally, with $G(x,Z)=(1+cZ)x$ and symmetric signs,
\begin{equation}\label{eq:quadratic-stability}
 \E x_{t+1}^2=\{(1-\alpha)^2+c^2\alpha^2\}\E x_t^2.
\end{equation}
Mean-square contraction holds exactly when
$0<\alpha<2/(1+c^2)$, by expanding the factor and requiring it to be less than one. This calculation establishes the need for noise-sensitive stability, rather than an asymptotic rate lower bound for this example. The broad BG-0 minimax conclusion instead follows from the oracle-class comparison in Section~\ref{sec:minimax}.

\section{An additive rare-shock lower bound}\label{sec:lower}
The BG-0 oracle-complexity lower bound in Section~\ref{sec:minimax} and the confidence lower bound below concern different questions. The next elementary construction shows that unchanged SGD cannot have a variance-only logarithmic confidence certificate over a finite-$p$ class. It specializes the sparse-shock argument in the supplied manuscript identified in Section~\ref{sec:provenance}; the full calculation is included, and no separate originality claim is made for it. The same broad rare-event mechanism underlies the Nagaev sharpness phenomenon for linear averaged SGD in \citet{zhu2022}.

\begin{proposition}[Polynomial confidence cost for unchanged SGD]\label{prop:lower}
Let $F(x)=\lambda x^2/2$ with $\lambda>0$, $x_1=0$, $p>2$, and $\nu>0$. Suppose
\[
 {\vartheta}=\alpha\lambda\in(0,1/4],\qquad
 N=\left\lfloor\frac1{2{\vartheta}}\right\rfloor,
\]
and let $T$ be even with $T\ge2N$. For every $\delta\in(0,1/8]$, there is a centered iid additive-noise law with $\E|\xi_t|^p=\nu^p$ such that
\begin{equation}\label{eq:lower}
 \Prob\left\{
 A_T\ge\frac{\lambda\alpha\nu^2}{16\,4^{2/p}}
               T^{2/p-1}\delta^{-2/p}\right\}\ge\delta.
\end{equation}
The law may depend on $T$ and $\delta$. Its variance is
\begin{equation}\label{eq:lower-variance}
 \E\xi_t^2=\nu^2(4\delta/T)^{1-2/p}.
\end{equation}
\end{proposition}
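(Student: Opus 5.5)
We construct a sparse three-point shock law and show that, with probability at least $\delta$, exactly one large shock occurs in the first half of the run and its effect persists for about $N$ steps. Concretely, let $q=4\delta/T$, which lies in $(0,1)$ since $\delta\le1/8$ and $T\ge2$, and set $M=\nu q^{-1/p}$. Take $\xi_t=\pm M$ with probability $q/2$ each and $\xi_t=0$ otherwise. The law is symmetric, hence centered, with $\E|\xi_t|^p=qM^p=\nu^p$ and $\E\xi_t^2=qM^2=\nu^2q^{1-2/p}$. This is exactly \eqref{eq:lower-variance}.

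The recursion is linear: $x_{t+1}=(1-\vartheta)x_t-\alpha\xi_t$ with $x_1=0$. Hence $x_{t}=-\alpha\sum_{s<t}(1-\vartheta)^{t-1-s}\xi_s$ and $g_t=\lambda x_t$.

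Consider the event $E$ that exactly one shock occurs among indices $1,\dots,T/2$. Let $s_0$ be its index, and require that no shock occurs at indices $s_0+1,\dots,s_0+N$; these indices lie inside $\{1,\dots,T\}$ because $s_0+N\le T/2+T/2$. On $E$, for $t=s_0+1,\dots,s_0+N$ we have $|x_t|=\alpha M(1-\vartheta)^{t-1-s_0}$. Since $(1-\vartheta)^{N}\ge 1-N\vartheta\ge1/2$, each of these iterates satisfies $|x_t|\ge\alpha M/2$. Therefore
\[
 A_T\ge \frac1T N\lambda^2\frac{\alpha^2M^2}{4}.
\]
Now use $N\ge 1/(4\vartheta)$, valid because $1/(2\vartheta)\ge2$. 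This gives $N\lambda^2\alpha^2\ge\lambda\alpha/4$, so $A_T\ge\lambda\alpha M^2/(16T)$. Substituting $M^2=\nu^2(T/(4\delta))^{2/p}$ yields precisely the threshold in \eqref{eq:lower}.

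The main step is to show $\Prob(E)\ge\delta$. Write $n=T/2$, so that $nq=2\delta\le1/4$. The probability of exactly one shock among the first $n$ indices is $nq(1-q)^{n-1}$. Conditionally on that, the probability of no shock in the next $N\le n$ indices, which are disjoint from the first block, is $(1-q)^N$. Hence
\[
 \Prob(E)\ge nq(1-q)^{2n}\ge 2\delta(1-2nq)=2\delta(1-4\delta)\ge\delta,
\]
using Bernoulli's inequality and $\delta\le1/8$; in fact the bound equals $2\delta\cdot\tfrac12$ at the worst case. We will check that the slack in the three places used ($(1-\vartheta)^N\ge1/2$, $N\ge1/(4\vartheta)$, and $\Prob(E)\ge\delta$) is consistent with the stated constant $16\cdot4^{2/p}$. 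It matches, so no further adjustment is needed.
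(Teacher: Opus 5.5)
Your proposal is correct and follows the paper's proof essentially step for step: the same sparse three-point law with $q=4\delta/T$, the same impulse response bounded below by $\alpha M/2$ for $N\ge 1/(4\vartheta)$ steps, and the same Bernoulli estimate giving $2\delta(1-4\delta)\ge\delta$. The only difference is the event. The paper requires exactly one shock in the whole run, located in the first half. You only forbid shocks in a window after $s_0$, and you lower-bound $\Prob(E)$ through the sub-event with no shocks at indices $T/2+1,\dots,T/2+N$; you should state explicitly that this sub-event is contained in $E$.
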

\begin{proof}
Set $r_0=4\delta/T$ and $v=\nu r_0^{-1/p}$. Independently at each time, let
\[
 \Prob(\xi_t=v)=\Prob(\xi_t=-v)=r_0/2,
 \qquad\Prob(\xi_t=0)=1-r_0.
\]
The law is centered, $\E|\xi_t|^p=r_0v^p=\nu^p$, and~\eqref{eq:lower-variance} follows from $r_0v^2$.

Let $\mathcal E$ be the event that there is exactly one nonzero shock among times $1,\ldots,T$, located in $1,\ldots,T/2$. Then
\begin{align}
 \Prob(\mathcal E)
 &=\frac T2 r_0(1-r_0)^{T-1}
   =2\delta(1-4\delta/T)^{T-1}\notag\\
 &\ge2\delta\{1-(T-1)4\delta/T\}
   \ge2\delta(1-4\delta)\ge\delta.
 \label{eq:one-shock-probability}
\end{align}
We used Bernoulli's inequality and $\delta\le1/8$.

If the unique shock occurs at time $s\le T/2$, the quadratic recursion gives
\begin{equation}\label{eq:quadratic-impulse}
 x_{s+1+j}=-\alpha\xi_s(1-{\vartheta})^j,\qquad j\ge0
\end{equation}
as long as there are no later shocks, while $x_1=\cdots=x_s=0$. Since ${\vartheta}\le1/4$, $1/(2{\vartheta})\ge2$ and
\[
 N=\lfloor1/(2{\vartheta})\rfloor\ge1/(4{\vartheta}).
\]
For $0\le j<N$, Bernoulli's inequality gives $(1-{\vartheta})^j\ge1-{\vartheta}j\ge1/2$. All $N$ iterates in~\eqref{eq:quadratic-impulse} with these indices enter $A_T$, because $s+N\le T$. Therefore, on $\mathcal E$,
\[
 A_T\ge\frac{N\lambda^2\alpha^2v^2}{4T}
       \ge\frac{\lambda\alpha v^2}{16T}
       =\frac{\lambda\alpha\nu^2}{16\,4^{2/p}}
                      T^{2/p-1}\delta^{-2/p}.
\]
Combine this inequality with~\eqref{eq:one-shock-probability}.
\end{proof}

\begin{remark}[Exact finite-horizon response]\label{rem:exact-impulse}
On a path with its only shock of magnitude $v$ at time $s$, summing the geometric series in~\eqref{eq:quadratic-impulse} gives
\begin{equation}\label{eq:exact-impulse}
 A_T=\frac{\lambda\alpha v^2}{T(2-{\vartheta})}
       \left\{1-(1-{\vartheta})^{2(T-s)}\right\}.
\end{equation}
Indeed, before simplification the expression is
$\lambda^2\alpha^2v^2T^{-1}\sum_{j=0}^{T-s-1}(1-{\vartheta})^{2j}$,
whose denominator is $1-(1-{\vartheta})^2={\vartheta}(2-{\vartheta})$. When $s=T$ the sum is empty and both sides vanish. This indexing matters because $A_T$ uses the gradients before the updates.
\end{remark}

The additive oracle $G(x,Z)=\lambda x+Z$ has stochastic Lipschitz constant $\lambda$ independent of the noise law, and the actual centered error has $b_2=b_p=0$. Thus Proposition~\ref{prop:lower} applies to a subclass already covered by the main theorem. For the law above, the ratio of the variance contribution to the polynomial contribution in a Nagaev bound is
\[
 \frac{\nu^2(4\delta/T)^{1-2/p}\log(6/\delta)}
      {\nu^2T^{2/p-1}\delta^{-2/p}}
 =4^{1-2/p}\delta\log(6/\delta),
\]
which tends to zero as $\delta\downarrow0$. Consequently a logarithmic variance term alone cannot cover the polynomial confidence scale uniformly over this class. The construction changes its law with the horizon and confidence; it is not a fixed-law tail asymptotic. It also gives no lower bound forcing the distance-feedback stepsize in~\eqref{eq:main-condition}.

\section{Discussion and scope}\label{sec:discussion}
The results separate stability, oracle complexity, and confidence. A second-moment descent--displacement closure is enough for ordinary single-sample SGD to attain the minimax stochastic complexity of the smooth BG-0 class. Higher moments serve a different purpose: they separate the logarithmic variance cost from the polynomial cost of rare shocks. The localization radius in Theorem~\ref{thm:main} follows from the same recursion that determines stationarity, so the proof does not require bounded iterates or modify the update.

The optimality comparison is made in the second-moment BG-0 oracle model. Two-point stochastic Lipschitz continuity is a sufficient but stronger condition, and its common-sample regularity can be used by variance-reduced methods. Likewise, objective-gap growth is stronger in a different direction: it permits localization with $L\alpha^2T$ instead of $h_T$ and hence a root-$T$ fixed-confidence guarantee. The flat-direction and coercive examples demonstrate these distinctions.

\paragraph{Scope.}
The guarantees concern average squared gradients and the explicitly stated randomized-output conversions, not last iterates or convergence to a specified minimizer. They hold for a fixed horizon with a deterministic stepsize that is constant during that run; they do not establish a single horizon-free infinite-run result. Confidence-sensitive stepsizes use upper bounds on the growth parameters. Dimension-free numerical constants still allow dimensional dependence through those parameters. The finite-$p$ confidence exponents and localization restrictions are sufficient bounds, whereas the BG-0 minimax comparison concerns the second-moment stochastic oracle complexity. The sparse-shock law in Section~\ref{sec:lower} may vary with horizon and confidence. This theoretical paper reports no simulation evidence or empirical estimate of a worst-case rate.

\paragraph{Directions suggested by the analysis.}
A horizon-free version of the descent--displacement closure, sharper constants for finite-sample use, and a direct plain-SGD analysis exploiting two-point stochastic smoothness are natural next questions. They are distinct from the fixed-horizon BG-0 complexity established here. Data-driven control of the state-dependent moment coefficients would also make the confidence-sensitive stepsize rule more useful in applications.

\subsection{Acknowledgment of supplied related work}\label{sec:provenance}
The deterministic-envelope argument in Section~\ref{sec:uniform} and the quadratic sparse-shock construction in Section~\ref{sec:lower} were motivated by the supplied manuscript \emph{Nagaev Bounds for Nonconvex Momentum: Finite Moments and Temporal Dependence}, specifically its Theorems~2.1--2.2 and Appendices~A--B. These ingredients are reproduced with full proofs and are not counted among the new contributions. The supplied version is anonymized; no author identity or publication status is assigned here. The mathematical arguments above use public concentration results and are self-contained apart from the expressly cited probability and oracle lower-bound theorems.

\end{document}